\documentclass[a4paper,fleqn]{cas-sc}
\usepackage[authoryear]{natbib}

\usepackage{amsthm,amsmath,amssymb}
\usepackage{mathtools,setspace,enumitem}

\theoremstyle{plain}
\newtheorem{theorem}{Theorem}[section]
\newtheorem{remark}[theorem]{Remark}
\newtheorem{lemma}[theorem]{Lemma}
\newtheorem{claim}[theorem]{Claim}
\newtheorem*{lemma*}{Lemma}
\newtheorem*{theorem*}{Theorem}
\newtheorem*{claim*}{Claim}
\newtheorem*{remark*}{Remark}
\newcommand{\R}{\mathbb{R}}
\newcommand{\E}{\mathbb{E}}

\newcommand{\V}{\mathcal{V}}

\newcommand{\tS}{\tilde{S}}

\newcommand{\e}{\varepsilon}

\newcommand{\al}{\alpha}
\newcommand{\be}{\beta}
\newcommand{\te}{\theta}
\newcommand{\hbe}{\hat{\beta}}
\newcommand{\hte}{\hat{\theta}}
\newcommand{\hga}{\hat{\gamma}}
\newcommand{\hal}{\hat{\alpha}}
\newcommand{\bbe}{\bar{\beta}}
\newcommand{\bte}{\bar{\theta}}
\newcommand{\bal}{\bar{\alpha}}
\newcommand{\mtr}{\mathrm{tr}}
\newcommand{\mer}{L}

\newcommand{\hE}{\hat{E}}

\usepackage{algorithm,comment,algorithmicx}

\author[1]{Luyuan Yang}
\author[1]{Brayden S. Garner}
\author[1]{Shayan Shafaei}
\author[1]{Chao Lan}
\ead{clan@ou.edu}

\affiliation[1]{
organization={
School of Computer Science, 
University of Oklahoma}, 
country = {U.S.}
}

\begin{document}

\renewcommand{\lastpage}{\pageref{LastMainPage}}

\let\WriteBookmarks\relax
\def\floatpagepagefraction{1}
\def\textpagefraction{.001}

\title[mode=title]{Distributed Sketching 
on Data Partitions for OLS Regression} 
\shorttitle{Distributed Sketching 
on Data Partitions for OLS Regression}
\shortauthors{}

\begin{abstract}
This paper studies distributed sketching 
for ordinary least squares (OLS) regression, 
an approach that distributes small sketches 
of a large data set over multiple machines to 
separately construct OLS estimators and average 
them. Unlike prior studies that consider sketching 
on the whole data set, we consider sketching 
on partitioned subsets to further reduce 
computational cost. Under the fixed design 
setting, we characterize the exact excess loss 
of the averaged OLS estimator. Results show 
that this loss is comparable to the established 
loss for sketching on the whole data set when 
the divergence among subset covariances is small. 
\end{abstract}

\begin{keywords}
Distributed Sketching \sep 
Ordinary Least Squares Regression \sep 
Expected Excess Loss \sep 
\end{keywords}

\maketitle

\section{Introduction}

Distributed sketching is an 
effective approach to scale 
up ordinary least squares (OLS) 
regression for massive data 
\citep{bartan2023distributed,garg2024distributed,chen2025gpu}. Its basic idea is to 
distribute small sketches of 
a large data set to multiple machines, 
ask each machine to build an OLS 
estimator from the received sketch, 
and average the estimators built 
by all machines. An \textit{exact} 
excess loss of the averaged estimator 
is characterized 
in \citep{bartan2023distributed}, 
presuming each sketch is mapped  
from the whole data set using 
Gaussian projection. This sketching 
mechanism has a known limitation, 
however, that the mapping process 
is often computationally expensive. 

This paper considers an alternative 
sketching mechanism, where each sketch 
is mapped from a partitioned subset 
of the whole data set. In this case, 
the mapping cost decreases as the subset 
size reduces or, equivalently, as the 
number of distributed machines increases 
(so that each machine holds a smaller subset). 
Under the fixed design setting, 
we also characterize the exact 
excess loss of the averaged estimator. 
Results show that this loss is comparable 
to the above established loss when the 
divergence among subset covariances is small.

\section{Preliminaries}

Consider the fixed design setting. 
Let $x_1, \ldots, x_n \in \R^d$ 
be a fixed set of $n$ points. 
Let $y_i \in \R$ be the label of $x_i$ 
generated by $y_i = x_i^T \al_* 
+ \e_i$, where $\al_* \in \R^d$ 
is fixed and $\e_i$'s are 
i.i.d. random noises with 
$\E \e_i = 0$ and $Var(\e_i) 
= \sigma^2$ for some fixed $\sigma > 0$. 
Let $X = [x_1, \ldots, x_n]^T 
\in \R^{n \times d}$ be the sample 
matrix, $Y = [y_1, \ldots, y_n]^T \in \R^n$ 
be the label vector and 
$E = [\e_1, \ldots, \e_n]^T \in \R^n$ 
be the noise vector. Then, the 
above setting can be written as 
\begin{equation}
\label{eq:settingX}
Y = X \al_* + E.    
\end{equation}

Consider any even 
row partition of $X$ into $k$ 
submatrices $X_1, \ldots, X_k \in 
\R^{p \times d}$, and assume 
$pk = n$ for convenience. Let 
$Y_1, \ldots, Y_k \in \R^p$ 
and $E_1, \ldots, E_k \in \R^p$ 
be the corresponding row 
partitions of $Y$ and $E$, 
respectively, which implies 
$Y_i = X_i \al_* + E_i$. 
Let $\tS = \{ \tilde{S}_1, \ldots, \tilde{S}_k 
\in \R^{m \times n}\}$ and $S = \{ S_1, 
\ldots, S_k \in \R^{m \times p}\}$
be two sets of independent random matrices 
with i.i.d. entries following $N(0, 1/m)$. 
Let $||\cdot||$ denote $L_2$ norm
and $I_s$ be an identity matrix of size $s$. 

Consider three OLS estimators. 
Given $E$, the first estimator 
is built on the whole data set i.e., 
\begin{equation}
\hal_* = \arg\min_{\al \in R^d} 
||X \al - Y||^2, 
\end{equation}
Given $(E,\tilde{S})$, 
the second estimator is built 
on sketches mapped from the whole set i.e.,
\begin{equation}
\bbe = \frac{1}{k} \sum_{i=1}^k \hbe_i,
\quad \text{where}\ \ \hbe_i = 
\arg\min_{\al \in \R^d} ||\tS_i 
X \al - \tS_i Y||^2.
\end{equation}
Given $(E,S)$, the third estimator is 
built on sketches mapped from 
partitioned subsets i.e., 
\begin{equation}
\bte = \frac{1}{k} 
\sum_{i=1}^k \hte_i,\quad 
\text{where}\ \  
\hte_i = \arg\min_{\al \in \R^d} 
||S_i X_i \al - S_i Y_i||^2.
\end{equation}
Finally, let the squared loss of 
a regression function $\al$ on a given 
$(X,Y)$ be denoted as 
\begin{equation}
L(\al) = ||X \al - Y||^2.    
\end{equation}

The exact excess loss for $\bbe$ 
has been studied. 
We will characterize the exact loss 
for $\bte$ and make proper comparisons.
Similar to prior studies, we focus 
on the case $p > d+1$, and 
assume with probability one that  
all data matrices are full rank and 
$X_i^T X_i$'s are invertible. 
A quantity that turns out useful 
in the characterization is 
\begin{equation}
D = \frac{1}{k^2} \sum_{i=1}^k 
\sum_{j=1}^k D_{ij}, \quad 
\text{where}\ \ 
D_{ij} =  \mtr[(X_i^TX_i)(X_j^TX_j)^{-1}].
\end{equation}
The quantity can be viewed as a divergence 
measure for the subset covariances, and a 
smaller value generally implies smaller 
divergence. It is clear that $D = d$ if 
$k = 1$ (so $X_i = X$). 
The following lemma presents 
a tight lower bound. 

\begin{lemma}
\label{lem:Dlower}
With probability one, $D \geq d$ 
and the equality is achieved when 
$X_i^TX_i = X_j^T X_j$ for all $i \neq j$. 
\end{lemma}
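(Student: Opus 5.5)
We will prove the inequality by pairing the terms $D_{ij}$ and $D_{ji}$ and reducing each pair to a matrix AM--GM bound. Write $A_i = X_i^T X_i$. By the standing assumption, with probability one each $A_i$ is symmetric positive definite. The diagonal terms are $D_{ii} = \mtr(I_d) = d$, so they contribute $k d$ to the double sum. We therefore only need to show
\begin{equation*}
D_{ij} + D_{ji} \geq 2d \quad \text{for all } i \neq j .
\end{equation*}
Summing over the $k(k-1)$ ordered off-diagonal pairs then gives $\sum_{i,j} D_{ij} \geq k^2 d$, which is $D \geq d$.

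For a fixed pair, set $M = A_j^{-1/2} A_i A_j^{-1/2}$. This matrix is symmetric positive definite, so write its eigenvalues as $\lambda_1, \ldots, \lambda_d > 0$. By cyclicity of the trace,
\begin{equation*}
D_{ij} = \mtr(A_i A_j^{-1}) = \mtr(M).
\end{equation*}
Similarly, $D_{ji} = \mtr(A_j A_i^{-1}) = \mtr(M^{-1})$, because $M^{-1} = A_j^{1/2} A_i^{-1} A_j^{1/2}$. Therefore
\begin{equation*}
D_{ij} + D_{ji} = \sum_{l=1}^d \left(\lambda_l + \lambda_l^{-1}\right) \geq 2d,
\end{equation*}
using $\lambda + 1/\lambda \geq 2$ for $\lambda > 0$. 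Equality holds if and only if every $\lambda_l = 1$, that is, $M = I_d$, that is, $A_i = A_j$. This gives the equality clause: if $X_i^T X_i = X_j^T X_j$ for all $i \neq j$, every $D_{ij} = d$ and hence $D = d$. As a bonus, the converse also holds, since $D = d$ forces equality in every pair.

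The one step needing care is showing $\mtr(A_i A_j^{-1}) \geq 0$-type bounds are not enough and the pairing is essential. A single term $D_{ij}$ can be smaller than $d$. For example, $A_i = \tfrac12 A_j$ gives $D_{ij} = d/2$. So the argument cannot bound the terms one at a time; it must exploit the symmetric pairing together with the similarity transform to $M$. That transform is exactly what makes both traces functions of the same eigenvalues. The "probability one" qualifier enters only through the invertibility assumption already stated in the paper.
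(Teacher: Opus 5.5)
Your proof is correct and follows the same route as the paper: pair $D_{ij}+D_{ji}$ for $i\neq j$, and apply $\lambda+1/\lambda\geq 2$ to the spectrum of $X_i^TX_i(X_j^TX_j)^{-1}$. Your similarity transform to the symmetric positive definite $M=A_j^{-1/2}A_iA_j^{-1/2}$ is actually a cleaner justification than the paper's, which writes the trace as a sum of \emph{singular} values; that identity holds only for eigenvalues, and they are real and positive here precisely because of your similarity. Your argument also yields the converse of the equality condition, which the paper does not state.
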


The quantity is also connected to 
several well-known measures, which 
enriches its interpretation. 
For convenience, assume all submatrices 
are centered so that $X_i^TX_i$ is 
the (unscaled) sample covariance for subset $i$. 

One connection is to the classic 
Stein's loss \citep{dey1985estimation} 
or, equivalently, Burg matrix divergence 
\citep{kulis2009low}.  
Standard divergence between 
the sample covariances of subsets 
$i$ and $j$ is defined as 
\begin{equation}
BD(i,j) = D_{ij} - d - 
\log\det[(X_i^TX_i)(X_j^TX_j)^{-1}].    
\end{equation}
We can define the average Burg 
divergence across all subsets as 
\begin{equation}
BD = \frac{1}{k^2} 
\sum_{i,j=1}^k BD(i,j) 
= D -d - \frac{1}{k^2} 
\log\det \prod_{i,j=1}^k 
(X_i^TX_i) (X_j^TX_j)^{-1}.   
\end{equation}
This means $D$ is a leading 
term of the average Burg divergence, 
and a smaller $D$ generally implies 
smaller divergence. 
Reversely, when the average Burg 
divergence is zero (if and only if 
each $BD(i,j)=0$ and equivalently 
all subsets have identical sample 
covariance), $D$ attains its 
minimum value $d$ according to 
Lemma \ref{lem:Dlower}. 

Another connection is to the popular 
leverage score \citep{bharadwaj2023fast}. 
Let $A^r$ denote the 
$r_{th}$ row of a given matrix $A$. 
For subset $i$, the standard leverage 
score of its $r_{th}$ point is
\begin{equation}
L_{ir} = (X^r_{i})\  
(X_i^T X_i)^{-1} (X^r_{i})^T,    
\end{equation}
which specifies the degree to which point 
$r$ deviates from other points in subset $i$. 
In a similar way, we can write 
\begin{equation}
D_{ij} = \sum_{r=1}^p L_{ir,j},\quad 
\text{where}\ \ L_{ir,j} = (X^r_{i})\  
(X_j^T X_j)^{-1} (X^r_{i})^T.     
\end{equation}
Compared to $L_{ir}$, term $L_{ir,j}$ can 
be viewed as a relative leverage score 
that specifies the degree to which 
point $r$ deviates from the points 
in subset $j$. Then, $D_{ij}$  
is the total relative leverage 
score for subset $i$ (w.r.t. subset $j$),  
and $D$ is the average relative score 
that describes the deviation between subsets.

\section{Main Result}

We first revisit a known result for $\bbe$. 
\begin{theorem}[\citep{bartan2023distributed}] 
\label{thm:prior}
Given $E$, for $m > d+1$: 
\begin{equation}
\E_{\tS \mid E} [\mer(\bbe)] 
- \mer(\hal_*) = \frac{1}{k} \frac{\mer(\hal_*)}{m - d - 1} d.   
\end{equation}
\end{theorem}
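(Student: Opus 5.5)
Although Theorem \ref{thm:prior} is quoted from \citep{bartan2023distributed}, I would re-derive it directly, since the same computation will be reused later for $\bte$. The plan has three steps: an orthogonal decomposition of the loss, a closed form for each sketched error, and Gaussian moment identities. Write $r = Y - X\hal_*$ for the OLS residual. It satisfies $X^T r = 0$ and $\|r\|^2 = \mer(\hal_*)$. For any $\al$, Pythagoras gives
\begin{equation*}
\mer(\al) = \mer(\hal_*) + \|X(\al - \hal_*)\|^2 .
\end{equation*}
So the excess loss of $\bbe$ equals $\E\|X(\bbe-\hal_*)\|^2$. Since $Y = X\hal_* + r$, each sketched estimator has the form
\begin{equation*}
\hbe_i = \hal_* + (X^T\tS_i^T\tS_i X)^{-1}X^T\tS_i^T\tS_i r .
\end{equation*}
We set $\Delta_i = X(\hbe_i - \hal_*)$ and expand
\begin{equation*}
\|X(\bbe-\hal_*)\|^2 = \frac{1}{k^2}\Big(\sum_i \|\Delta_i\|^2 + \sum_{i\ne j}\Delta_i^T\Delta_j\Big).
\end{equation*}

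Next I would use rotational invariance. Take a thin SVD $X = U\Sigma V^T$ and let $U_\perp$ be an orthonormal basis of the complement of $\mathrm{range}(X)$; then $r = U_\perp U_\perp^T r$. The Gaussian matrix $\tS_i[U, U_\perp]$ splits into two independent blocks, $G_1 = \tS_i U$ ($m\times d$) and $G_2 = \tS_i U_\perp$. In this notation
\begin{equation*}
\Delta_i = U (G_1^TG_1)^{-1}G_1^T G_2 U_\perp^T r ,
\end{equation*}
so $\Delta_i$ depends on $X$ only through $U$. Conditionally on $G_1$, the vector $G_2U_\perp^T r$ is distributed as $N(0, \|r\|^2 I_m/m)$ and has mean zero. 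Hence $\E\Delta_i = 0$. Because the $\tS_i$ are independent, every cross term $\E\Delta_i^T\Delta_j$ with $i\ne j$ vanishes. This is where the factor $1/k$ in the theorem comes from.

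For the diagonal terms,
\begin{equation*}
\E\|\Delta_i\|^2 = \frac{\|r\|^2}{m}\,\E\,\mtr[(G_1^TG_1)^{-1}] .
\end{equation*}
Here $mG_1^TG_1$ is Wishart $W_d(m, I_d)$, and the inverse-Wishart mean is $I_d/(m-d-1)$, valid for $m>d+1$. This gives $\E\mtr[(G_1^TG_1)^{-1}] = m\,d/(m-d-1)$, hence $\E\|\Delta_i\|^2 = \mer(\hal_*)\,d/(m-d-1)$. Summing $k$ identical diagonal terms and dividing by $k^2$ yields the claimed formula.

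The main obstacle is the independence argument in the second step. One has to check that $G_1$ and $G_2$ are independent, which holds because $[U,U_\perp]$ has orthonormal columns, and that conditioning on $E$ fixes $r$ while $\tS$ remains independent of $E$. Once that is settled, the inverse-Wishart moment finishes the proof. For $\bte$ the same decomposition will not be this clean: the residual $Y_i - X_i\hal_*$ is not orthogonal to $X_i$. That is presumably why the divergence $D_{ij}$ enters there.
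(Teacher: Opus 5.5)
Your argument is correct and follows essentially the route the paper attributes to \citep{bartan2023distributed}, which it summarizes in the proof of Claim \ref{claim2} and mirrors for $\bte$ in Claims \ref{claim1}--\ref{claim:decomp_01}: reduce the excess loss to $\E\|X(\bbe-\hal_*)\|^2$, kill the cross terms by independence and conditional unbiasedness, compute the conditional covariance given the sketched design, and finish with the inverse-Wishart mean. The differences are only in presentation: your Pythagorean identity holds pointwise, which avoids the expectation bookkeeping of Claim \ref{lem2}, and splitting $\tS_i[U,U_\perp]$ into independent blocks is an explicit version of the independence of $\tS_i X$ and $\tS_i r$ that the paper obtains from the uncorrelated-jointly-Gaussian argument of Claim \ref{claim1}\ref{claim1:indpEX}.
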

\begin{remark}
\label{rem:prior}
Under the fixed design setting, 
Theorem \ref{thm:prior} implies    
\begin{equation}
\label{thm:priorimplied}
\E_{\tS, E} [\mer(\bbe)] 
- \E_{E} [\mer(\hal_*)]
=  \frac{\sigma^2}{k} 
\frac{n-d}{m - d - 1} d \quad := B_{\be}.
\end{equation}
\end{remark}

Our result for $\bte$ is as follows.

\begin{theorem}
\label{thm:main}
Under the fixed design setting, 
for $p \geq m > d+1$: 
\begin{equation}
\E_{S, E} [\mer(\bte)] 
- \E_{E} [\mer(\hal_*)] = 
\frac{\sigma^2}{k} 
\frac{n-kd}{m-d-1} D
+ \sigma^2 (D - d) \quad := B_{\te}.  
\end{equation}
\end{theorem}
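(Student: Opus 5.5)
The plan is to work directly with the closed forms of the estimators and take expectations in the order $E$ first, then $S$. Since $X_i$ has full column rank and $p\ge m>d+1$, $S_iX_i$ is almost surely of full column rank. Write
\[
A_i=(X_i^TS_i^TS_iX_i)^{-1}X_i^TS_i^TS_i\in\R^{d\times p},
\]
so that $\hte_i=A_iY_i=\al_*+A_iE_i$, using $A_iX_i=I_d$. Hence $X\bte-Y=\frac1k\sum_i XA_iE_i-E$. Rather than comparing with $\hal_*$ directly, I will compute $\E_{S,E}[\mer(\bte)]$ and subtract the standard value $\E_E[\mer(\hal_*)]=\sigma^2(n-d)$.

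First I expand
\[
\mer(\bte)=\Big\|\tfrac1k\textstyle\sum_i XA_iE_i\Big\|^2-\tfrac2k\sum_i E^TXA_iE_i+\|E\|^2 .
\]
Conditionally on $S$, I take $\E_E$ using the independence of the blocks $E_i$ and $\E E_iE_j^T=\sigma^2\delta_{ij}I_p$.
\begin{itemize}
\item The last term gives $\sigma^2 n$.
\item In the cross term only the $i$-th block of $E$ survives, giving $\sigma^2\,\mtr(X_iA_i)=\sigma^2\,\mtr(A_iX_i)=\sigma^2 d$ for each $i$. The total is $-2\sigma^2 d$, deterministically in $S$.
\item In the quadratic term the off-diagonal $i\neq j$ pairs vanish, leaving $\frac{\sigma^2}{k^2}\sum_i\mtr\big(X^TX\,A_iA_i^T\big)$.
\end{itemize}
So everything reduces to computing $\E_{S_i}[A_iA_i^T]$.

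This computation is the main obstacle, and I would handle it by a rotation argument. Take the thin QR decomposition $X_i=U_iR_i$ with $U_i\in\R^{p\times d}$ orthonormal, and complete it with $U_{i\perp}$. Set $G=S_iU_i$ and $H=S_iU_{i\perp}$. By rotational invariance of the Gaussian matrix, $G$ ($m\times d$) and $H$ ($m\times(p-d)$) are independent with i.i.d. $N(0,1/m)$ entries. Then $A_i=R_i^{-1}(G^TG)^{-1}G^T[G,\;H][U_i,U_{i\perp}]^T$, and splitting $I_p=U_iU_i^T+U_{i\perp}U_{i\perp}^T$ gives
\[
A_iA_i^T=R_i^{-1}R_i^{-T}+R_i^{-1}(G^TG)^{-1}G^THH^TG(G^TG)^{-1}R_i^{-T}.
\]
Here the first piece equals $(X_i^TX_i)^{-1}$. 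For the second piece, conditioning on $G$ gives $\E HH^T=\frac{p-d}{m}I_m$. It then remains to use the inverse-Wishart mean $\E(G^TG)^{-1}=\frac{m}{m-d-1}I_d$, valid for $m>d+1$. Together these give
\[
\E_{S_i}[A_iA_i^T]=\Big(1+\tfrac{p-d}{m-d-1}\Big)(X_i^TX_i)^{-1}.
\]

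Finally, since $X^TX=\sum_jX_j^TX_j$, we have $\sum_i\mtr\big(X^TX(X_i^TX_i)^{-1}\big)=\sum_{i,j}D_{ij}=k^2D$. The quadratic term therefore contributes $\sigma^2D\big(1+\frac{p-d}{m-d-1}\big)$. Collecting terms,
\[
\E_{S,E}[\mer(\bte)]-\sigma^2(n-d)=\sigma^2(D-d)+\sigma^2D\,\frac{p-d}{m-d-1}.
\]
Writing $p-d=(n-kd)/k$ yields exactly $B_\te$. As a sanity check, for $k=1$ we have $D=d$ and this reduces to $B_\be$ of Remark \ref{rem:prior}.
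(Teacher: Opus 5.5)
Your proof is correct, but it is organized quite differently from the paper's.

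\textbf{The paper's route.} The paper conditions on $E$ first. Using $\E_{S\mid E}[\hte_i]=\hal_i$, which rests on the independence of $S_iX_i$ and $S_i\hE_i$, it rewrites the excess loss as $\E_{S,E}\|X(\bte-\hal_*)\|^2$. It then splits this into a sketching part $\frac{1}{k^2}\sum_i\E\|X(\hte_i-\hal_i)\|^2$ and a partition part $\|X(\bal-\hal_*)\|^2$. The sketching part is evaluated with the adapted prior lemma (Claim \ref{claim2}): the conditional covariance $(X_i^TS_i^TS_iX_i)^{-1}\mer_i(\hal_i)/m$, the inverse-Wishart mean, and $\E_E[\mer_i(\hal_i)]=(p-d)\sigma^2$. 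The partition part is handled by a separate noise computation giving $(D-d)\sigma^2$.

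\textbf{Your route.} You condition on $S$ instead, treat $\bte=\al_*+\frac1k\sum_iA_iE_i$ as a linear estimator with $A_iX_i=I_d$, and average out the noise immediately. This reduces everything to the single moment $\E_{S}[A_iA_i^T]$. Your split $S_i[U_i,U_{i\perp}]=[G,H]$ is the same mechanism as the paper's independence claim, since $S_iX_i=GR_i$ and $S_i\hE_i=HU_{i\perp}^TE_i$. The difference is that the two pieces of $A_iA_i^T$ produce the two terms of $B_\te$ directly. The $(X_i^TX_i)^{-1}$ piece contributes $\sigma^2D$, which together with the cross term $-2\sigma^2d$, the term $n\sigma^2$ and the subtracted $\sigma^2(n-d)$ leaves $\sigma^2(D-d)$. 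The $HH^T$ piece gives the sketching inflation $\sigma^2D\frac{p-d}{m-d-1}$.

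\textbf{What each buys.} Your argument is shorter and self-contained: it needs neither Claim \ref{lem2}, nor the bias--variance split, nor the separate computation of $\E_E\|X(\bal-\hal_*)\|^2$. Also, the quantity $\frac{\sigma^2}{k^2}\sum_i\mtr(X^TXA_iA_i^T)$ that you evaluate exactly is the one the paper only bounds with high probability in the proof of Remark \ref{rem:ours2}. The paper's route, in exchange, yields the conditional identity $\E_{S\mid E}\|X\bte-X\hal_*\|^2=\frac{1}{k^2}\sum_i\frac{d+\Delta_i}{m-d-1}\mer_i(\hal_i)+\|X\bal-X\hal_*\|^2$. This holds for any fixed label vector, paralleling Theorem \ref{thm:prior}, and it separates the partition error, which persists even without sketching, from the sketching error.
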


We have three observations. 
First, $B_\te = B_{\be}$ if $k = 1$, 
which is expected since no partition 
is performed by $\bte$. Second, 
$B_\te \leq B_\be$ if $D$ is 
close to $d$, suggesting that $\bte$ 
can outperform $\bbe$ if subset covariances 
have small divergence (which may be 
common if data are i.i.d. sampled).
Third, $B_\te \geq B_\be$ if $D$ 
is far from $d$, suggesting 
that $\bte$ can perform worse 
if subset covariances have large 
divergence (which may happen
if data are collected from 
heterogeneous sources). 

These observations reveal 
that the gap between $D$ and $d$ 
is an important indicator of 
the $\bte$ performance. To have a 
more concrete idea on this gap, 
we may further incorporate the 
randomness of $X$. 

\begin{remark}
\label{claim:ED}
If $x_i$'s are sampled i.i.d. from 
$N(\vec{0}, \V)$ with an invertible 
covariance $\V$, then $\frac{\E_X[D]}{d} 
= \frac{n- (d +1)}{n-k(d+1)}$. 
\end{remark}
The remark shows that the gap 
can be small when $k$ is small or 
$n \gg \max(k,d)$; in particular, 
$\E[D] = d$ if $k = 1$, which aligns 
with the above results. 
Another interesting experiment is 
to plug the rate into $B_{\te} 
- B_{\be}$, which gives 
\begin{equation}
\label{eq:gap}
\E_X [B_{\be} - B_{\te}] = 
\sigma^2 d \cdot (F(k; n, d, m) + 1),  
\end{equation}
where $F(k; n, d, m) = \frac{n-d}{m-d-1} 
k - \frac{1}{k} \frac{n - kd}{m-d-1} 
\frac{n-(d+1)}{n-k(d+1)} - 
\frac{n-(d+1)}{n-k(d+1)}$ is a 
function of $k$. Through standard 
analysis (that inspects function derivative), 
one can show $F$ has an inverted U-shape. 
This implies that, as $k$ increases, 
the performance gap between $\bte$ 
and $\bbe$ can first increase and 
then decrease. This is confirmed by
our numerical results. 

Finally, we derive a high 
probability bound for the  
excess loss of $\bte$. 
Its proof is given 
in Appendix \ref{sec:appB} and 
primarily relies on bounding 
$(X_i^T S_i^T S_i X_i)^{-1}$ 
using a classical singular value 
bound for random matrices. 

Given an estimator $\hal$, 
let $L_o(\hal) = \E_{Y} ||X \hal 
- Y||$ denote its out-of-sample 
excess loss, where $Y$ and 
$\hal$ are built from independent label 
noises $E_{Y}$ and $E_{\hal}$ respectively. 
Our result is stated as follows. 

\begin{remark}
\label{rem:ours2}
For $\delta > 0$ 
and $m \geq (\sqrt{d} + \sqrt{2 
\ln(2k/\delta)})^2$, with 
probability at least $1 - \delta$ 
over the random choice of $S$, 
\begin{equation}
\E_{E_{\bte}} [L_o(\bte)] 
- \E_{E_{\hal_*}}[L_o(\hal_*)] 
\leq \sigma^2 D \cdot (\Gamma(p,m,k,\delta) 
- 1 ) + \sigma^2 (D - d) \quad := B_{\te}', 
\end{equation}
where $\Gamma(p,m,k,\delta) 
= \frac{(\sqrt{m} + \sqrt{p} 
+ \sqrt{2\log(2k/\delta)} )^2}{
(\sqrt{m}-\sqrt{d} - \sqrt{2 
\log(2k/\delta)} )^{2}}$.
\end{remark}

Comparing $B_{\te}'$ with $B_{\te}$, 
we see they have the same second term. 
For $B_{\te}$, the first term 
is $\sigma^2 D \cdot \frac{p-d}{m-d-1}$, 
which is dominated by $\sigma^2 D \cdot 
\frac{p}{m}$ when $m \gg d$. 
For $B_{\te}'$, by ignoring the logarithmic 
terms, the first term is approximately 
\begin{equation}
\sigma^2 D \cdot (\Gamma(p,m,k,\delta) - 1 )  
\approx \sigma^2 D \cdot 
\left( \frac{(\sqrt{m} + \sqrt{p})^2}{ 
(\sqrt{m} - \sqrt{d})^2} - 1 \right),  
\end{equation}
which is also dominated by $\sigma^2 D 
\cdot \frac{p}{m}$ when $p \gg m \gg d$. 
This suggests $B_{\te}'$ can be 
a high probability proxy of $B_{\te}$, 
which means the expected excess loss 
is representative of a typical distributed 
sketching performance.

\section{Proof of Theorem \ref{thm:main}}

Given $E$, let the squared loss of an 
estimator $\al$ on $(X_i,Y_i)$ be 
\begin{equation}
\mer_i(\al) = ||X_i \al - Y_i||^2.   
\end{equation}
Let $\bal$ be an average estimator 
defined as 
\begin{equation}
\bal = \frac{1}{k} \sum_{i=1}^k \hal_{i}, 
\quad \text{where}\ \ 
\hal_{i} = \arg\min_{\al} L_i(\al). 
\end{equation}
Let $\hE_{i}$ be the residual 
associated with $\hal_i$ on 
$(X_i, Y_i)$, i.e.,  
\begin{equation}
\label{eq:htei*}
\hE_{i} = Y_i - X_i \hal_{i}.  
\end{equation}
Note $\hE_i$ is different from 
$E_i$ and is fixed given $E_i$. 
Finally, recall $S = \{S_1, \ldots, S_k\}$ 
and let $SX = \{S_1 X_1, \ldots, S_k X_k\}$. 
Since elements in each set are mutually 
independent, any variable $G_i$ that only 
depends on $S_i$ has $\E_{S} [G_i] 
= \E_{S_i} [G_i]$ and $\E_{SX} [G_i] 
= \E_{S_iX_i} [G_i]$. 
Any expectation without subscript 
inherits the subscript of the 
preceding expectation. 

\subsection{Roadmap}

\vspace{5pt}
A basic observation is 
$\E_{S \mid E}[\hte_i] = \hal_i$ 
(Claim \ref{claim1}). 
Based on it, we show that 
(Claim \ref{eq:claimexp*}):
\begin{equation}
\label{eq:basicevaluatL}
\E_{E}[\mer(\hal_*)] = (n-d) \sigma^2 
\quad \text{and} \quad 
\E_{E} [L_{i}(\hal_{i})]  
= (p-d) \sigma^2.
\end{equation}
The former establishes Remark \ref{rem:prior} 
and implies another important characterization 
(Claim \ref{lem2}): 
\begin{equation}
\label{eq:roadmap_00v}
\E_{S, E} [\mer(\bte)] 
- \E_{E} [\mer(\hal_*)] 
= \E_{S, E} ||X (\bte - \hal_*)||^2.  
\end{equation}
To evaluate the right side, we 
first fix $E$ and show that 
(Claim \ref{claim:decomp_01}): 
\begin{equation}
\label{eq:roadmap_01v}
\E_{S \mid E} ||X \bte - X \hal_*||^2 
= \E || X \bte - X \bal||^2
+ ||X \bal - X \hal_*||^2 
= \frac{1}{k^2} \sum_{i=1}^k 
\E ||X (\hte_i - \hal_i)||^2 
+ ||X \bal - X \hal_*||^2,      
\end{equation}
where the first equality is 
due to $\E_{S \mid E} 
[\bte] = \bal$ (Claim \ref{claim1}) 
and the second is due to $\E_{S \mid E} 
[\hte_i] = \hal_i$. 
Further, 
\begin{equation}
\label{eq:proofroadmap_01}
\E_{S \mid E} || X (\hte_i - \hal_i)||^2 
= \E || X_i (\hte_i - \hal_i)||^2 + \E 
{\sum}_{j \neq i}  || X_j (\hte_i - \hal_i)||^2
 = \frac{d+\Delta_i}{m-d-1} \mer_i(\hal_i), 
\end{equation}
where $\Delta_i = \sum_{j \neq i} \mtr 
(X_j^TX_j) (X_i^TX_i)^{-1}$. 

In (\ref{eq:proofroadmap_01}), 
the first equality follows a 
basic fact that $||Xz||^2 = \sum_{i=1}^k 
||X_i z||^2$ for any $z \in \R^d$, 
and the second equality follows 
two facts: $\E || X_i (\hte_i - \hal_i)||^2 
= \frac{d}{m-d-1} \mer_i(\hal_i)$ 
by Claim \ref{claim2}\ref{claim2:lemma1} 
(which is a direct adaptation of the 
prior result \citep{bartan2023distributed}  
from $(X,Y)$ to $(X_i, Y_i)$), 
and $\E \sum_{j \neq i} || X_j 
(\hte_i - \hal_i)||^2 = \frac{\Delta_i}{m-d-1} 
\mer_i(\hal_i)$ by Claim \ref{claim3}. 

Plugging (\ref{eq:proofroadmap_01}) 
back to (\ref{eq:roadmap_01v}) and 
taking expectation w.r.t. $E$ on both 
sides, we have 
\begin{equation}
\label{eq:proofrm_final}
\E_{S, E} ||X \bte - X \hal_*||^2 
= \frac{1}{k^2} \sum_{i=1}^k 
\frac{d+\Delta_i}{m-d-1} \E_{E} [\mer_i(\hal_i)]
+ \E_{E} ||X \bal - X \hal_*||^2. 
\end{equation}
The right side can be evaluated as 
follows: $\E_{E} [L_{i}(\hal_{i})] 
= (p-d) \sigma^2$ by (\ref{eq:basicevaluatL}), 
$\frac{1}{k^2} \sum_{i=1}^k (d+\Delta_i) = D$ 
by Claim \ref{claim:Dd}, and $\E_{E} ||X \bal - X \hal_*||^2 
= (D - d) \sigma^2$ by Claim \ref{claim:error_p3}. 
Plugging them back to (\ref{eq:proofrm_final}) 
proves Theorem \ref{thm:main}.

\subsection{Supporting Claims}
\label{sec:supportclaim}
\vspace{5pt}

\begin{claim} 
\label{claim1}
Given $E$, the following hold: 
\begin{enumerate}[label=(\alph*)]
\vspace{-1pt}
\item $X_i^T \hE_{i} = 0$. \label{claim1:orthEX}

\vspace{-3pt}
\item $S_i X_i$ and $S_i \hE_{i}$ are independent. 
\label{claim1:indpEX}

\vspace{-3pt}
\item $\E_{S \mid SX, E} [\hte_i] = 
\E_{S \mid E} [\hte_i] = \hal_{i}$  
and $\E_{S \mid E} [\bte] = \bal$.
\label{claim1:unbhte}
\end{enumerate}
\end{claim}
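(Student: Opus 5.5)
Part (a) is the normal equations for $\hal_i$. Since $X_i^TX_i$ is invertible with probability one, $\hal_i = (X_i^TX_i)^{-1}X_i^TY_i$ is the unique minimizer of $L_i$. Setting the gradient of $L_i$ to zero gives $X_i^T(Y_i - X_i\hal_i) = 0$, which by (\ref{eq:htei*}) is $X_i^T\hE_i = 0$.

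For part (b), fix $E$, so that $X_i$ and $\hE_i$ are deterministic. Both $S_iX_i$ and $S_i\hE_i$ are linear in $S_i$, so the rows of the pair are jointly Gaussian. Write $s_r^T$ for the $r$-th row of $S_i$. The rows $s_r$ are i.i.d.\ $N(0, I_p/m)$, and the $r$-th rows of $S_iX_i$ and $S_i\hE_i$ are $X_i^Ts_r$ and $\hE_i^Ts_r$. Their cross-covariance is
\[
\E[X_i^Ts_r s_r^T\hE_i] = \tfrac{1}{m}X_i^T\hE_i = 0
\]
by part (a). Jointly Gaussian vectors with zero cross-covariance are independent. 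Independence across rows then gives independence of the full matrices $S_iX_i$ and $S_i\hE_i$. (If $\hE_i = 0$ the claim is trivial.)

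For part (c), first write $\hte_i$ in closed form. Since $m > d+1$ and $X_i$ has full rank, $S_iX_i$ has full column rank with probability one, so
\[
\hte_i = (X_i^TS_i^TS_iX_i)^{-1}X_i^TS_i^TS_iY_i.
\]
Substituting $Y_i = X_i\hal_i + \hE_i$ gives
\[
\hte_i = \hal_i + (X_i^TS_i^TS_iX_i)^{-1}(S_iX_i)^T(S_i\hE_i).
\]
Now condition on $S_iX_i$ (and $E$). By part (b), $S_i\hE_i$ is independent of $S_iX_i$, so its conditional mean is its unconditional mean, $\E[S_i]\hE_i = 0$. The second term therefore has conditional mean zero, and $\E_{S\mid SX,E}[\hte_i] = \hal_i$. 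This uses the paper's convention that expectations over $S$ of quantities depending only on $S_i$ reduce to $\E_{S_i}$. Taking a further expectation over $SX$ by the tower property gives $\E_{S\mid E}[\hte_i] = \hal_i$. This needs the unconditional expectation to exist, which I will check in the next paragraph. Averaging over $i$ by linearity then yields $\E_{S\mid E}[\bte] = \bal$.

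The main obstacle is integrability: $\E_{S\mid E}[\hte_i]$ must be finite for the tower-property step to be valid. I plan to handle it as follows. Conditionally on $S_iX_i$, the vector $S_i\hE_i$ is $N(0, \|\hE_i\|^2 I_m/m)$. Hence the conditional second moment of the correction term is $\frac{\|\hE_i\|^2}{m}\,\mtr[(X_i^TS_i^TS_iX_i)^{-1}]$. The matrix $X_i^TS_i^TS_iX_i$ is Wishart with $m$ degrees of freedom and scale matrix $X_i^TX_i/m$. Its inverse has finite mean when $m > d+1$, which is exactly the hypothesis. So the correction term is square-integrable, hence integrable, and the tower property applies.
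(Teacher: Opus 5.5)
Your proposal is correct and takes the same route as the paper. Part (a) uses the normal equations, part (b) uses the fact that jointly Gaussian variables with zero cross-covariance are independent, and part (c) uses the decomposition $\hte_i = \hal_i + (X_i^TS_i^TS_iX_i)^{-1}(S_iX_i)^T(S_i\hE_i)$, conditioning on $SX$, then the tower property. Your row-wise cross-covariance computation and your integrability check via the finite mean of the inverse Wishart matrix for $m>d+1$ are sound refinements that the paper leaves implicit.
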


\begin{proof}[Proof Sketch]
\ref{claim1:orthEX} holds 
as $\hal_i$ minimizes 
$L_i(\al)$, \ref{claim1:indpEX} follows 
from a well-known property that jointly-Gaussian 
variables are independent if they are uncorrelated, 
and \ref{claim1:unbhte} is by the estimator 
design and the independence property 
in \ref{claim1:indpEX}. 
\end{proof}

\begin{claim}
\label{eq:claimexp*}
$\E_{E} [\mer(\hal_*)] 
= (n-d) \sigma^2$\quad  
and\quad  
$\E_{E}[\mer_i(\hal_i)] 
= (p-d) \sigma^2$.    
\end{claim}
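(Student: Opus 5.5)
The plan is to write each residual as a projection of the noise and then take a trace. Since $X$ has full column rank, $\hal_* = (X^TX)^{-1}X^TY$. The residual is therefore
\[
Y - X\hal_* = (I_n - H)Y, \qquad H = X(X^TX)^{-1}X^T .
\]
Because $(I_n - H)X = 0$ and $Y = X\al_* + E$, this residual equals $(I_n - H)E$. The matrix $I_n - H$ is symmetric and idempotent, so
\[
\mer(\hal_*) = E^T (I_n - H) E .
\]

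Next take the expectation over $E$, which is the only random quantity here since the design is fixed. The entries of $E$ are i.i.d. with mean zero and variance $\sigma^2$, so $\E_E[E E^T] = \sigma^2 I_n$. The trace identity for quadratic forms then gives
\[
\E_E[E^T (I_n - H) E] = \mtr\big[(I_n - H)\,\E_E[E E^T]\big] = \sigma^2\, \mtr(I_n - H).
\]
By cyclicity, $\mtr(H) = \mtr[(X^TX)^{-1}X^TX] = \mtr(I_d) = d$. Hence $\mtr(I_n - H) = n - d$, and $\E_E[\mer(\hal_*)] = (n-d)\sigma^2$.

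The second identity follows by the same argument applied to the block $(X_i, Y_i, E_i)$. We have $Y_i = X_i\al_* + E_i$, the matrix $X_i^TX_i$ is invertible by the standing assumption, and $E_i \in \R^p$ has i.i.d. entries with variance $\sigma^2$. Let $H_i = X_i(X_i^TX_i)^{-1}X_i^T$. The residual is $\hE_i = (I_p - H_i)E_i$, which is consistent with Claim~\ref{claim1}\ref{claim1:orthEX}. It follows that
\[
\E_E[\mer_i(\hal_i)] = \sigma^2\,\mtr(I_p - H_i) = (p-d)\sigma^2 .
\]

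No step here is a real obstacle; the proof is the standard residual-sum-of-squares computation. The only points that need care are the following:
\begin{itemize}
\item $E_i$ inherits $\E_E[E_iE_i^T] = \sigma^2 I_p$ from $E$.
\item Only second moments of the noise are used, so Gaussianity is not needed.
\item The full-rank assumption is what makes $\mtr(H_i) = d$.
\end{itemize}
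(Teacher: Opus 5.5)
Your proposal is correct and follows essentially the same route as the paper: both write the residual as $(I_n - X(X^TX)^{-1}X^T)E$, take the trace against $\E_E[EE^T]=\sigma^2 I_n$, use $\mtr[X(X^TX)^{-1}X^T]=d$, and then repeat the argument on $(X_i,Y_i,\hal_i)$. The only difference is cosmetic. You use idempotency to reduce directly to $\sigma^2\mtr(I_n-H)$, whereas the paper expands $\mtr[GG^T]$ term by term to get $d-2d+n$.
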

\begin{proof}[Proof Sketch] 
Let $G = X(X^TX)^{-1}X^T - I_n$. 
We can show 
$\E_{E} 
[\mer(\hal_*)] 
= \E ||G E||^2 = \E \mtr[G EE^T G^T]
= \sigma^2 (n-d)$, 
where the first equality is by 
plugging in the $\hal_*$ solution 
and $Y = X \al_* + E$, and 
the last is by 
straight calculation. 
Term $\E_E[\mer_i(\hal_i)]$ is 
evaluated in the same way, based 
on $(X_i,Y_i, \hal_i)$ instead 
of $(X,Y, \hal_*)$. 
\end{proof}

\begin{claim}
\label{lem2}
$\E_{S, E} [\mer(\bte)] - 
\E_{E} [\mer(\hal_*)] = 
\E_{S, E} ||X (\bte - \hal_*)||^2$.
\end{claim}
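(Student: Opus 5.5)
We plan to use the Pythagorean decomposition of the squared loss around the full-data OLS solution $\hal_*$. Write $\hat{E}_* = Y - X\hal_*$ for the full-data residual. Since $\hal_*$ minimizes $L(\al) = ||X\al - Y||^2$, the normal equations give $X^T \hat{E}_* = 0$, which is the whole-data analogue of Claim~\ref{claim1}\ref{claim1:orthEX}. For any $\al \in \R^d$ we then expand
\begin{equation*}
L(\al) = ||X(\al - \hal_*) - \hat{E}_*||^2 = ||X(\al - \hal_*)||^2 - 2(\al - \hal_*)^T X^T \hat{E}_* + ||\hat{E}_*||^2 = ||X(\al - \hal_*)||^2 + L(\hal_*),
\end{equation*}
because the cross term vanishes identically.

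The identity holds deterministically, for every realization of $E$ and every $\al$. We therefore apply it with $\al = \bte$, which is a random vector depending on $(E,S)$; this gives the pointwise equality
\begin{equation*}
L(\bte) - L(\hal_*) = ||X(\bte - \hal_*)||^2 .
\end{equation*}
Next we take $\E_{S,E}$ of both sides. The quantity $L(\hal_*)$ depends only on $E$, so $\E_{S,E}[L(\hal_*)] = \E_E[L(\hal_*)]$. By Claim~\ref{eq:claimexp*} this equals $(n-d)\sigma^2$, which is finite, so subtracting it is legitimate. Linearity of expectation then yields the stated equality, with either both sides finite or both infinite. Finiteness of $\E_{S,E}[L(\bte)]$ is not needed for the identity itself, and it follows a posteriori from the later claims, which use $m > d+1$.

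There is no real obstacle here. The only point requiring care is to invoke orthogonality for the \emph{full-data} residual $\hat{E}_*$, rather than the subset residuals $\hE_i$ of Claim~\ref{claim1}. The unbiasedness $\E_{S\mid E}[\bte] = \bal$ is not needed at this step, since the cross term is already zero before any expectation is taken.
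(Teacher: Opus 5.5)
Your proof is correct, but it takes a different and more direct route than the paper.

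The paper expands $\E_{S,E}\|X(\bte-\hal_*)\|^2$ into $\E[\mer(\bte)] + \E[\mer(\hal_*)] - 2\E(X\hal_*-Y)^T(X\bte-Y)$. It then works on the cross term in three steps:
\begin{enumerate}[label=(\roman*)]
\item it conditions on $E$ and replaces $\bte$ by $\bal$, using the unbiasedness $\E_{S\mid E}[\bte]=\bal$ from Claim~\ref{claim1}\ref{claim1:unbhte};
\item it cancels $\hal_*^TX^TX\bal$ against $Y^TX\bal$, which is the normal equation in disguise;
\item it computes $\E Y^TY - \E\hal_*^TX^TY = (n-d)\sigma^2$ from the second moments of $E$, and matches this with Claim~\ref{eq:claimexp*}.
\end{enumerate}

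Your observation that $X^T(Y-X\hal_*)=0$ shows the paper's cross term equals $\mer(\hal_*)$ for every realization. Indeed $Y^TY-\hal_*^TX^TY=\mer(\hal_*)$ holds identically. So the conditioning, the Gaussian-independence argument behind the unbiasedness, and the moment calculation are all unnecessary.

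Your route buys generality. The identity $\mer(\al)-\mer(\hal_*)=\|X(\al-\hal_*)\|^2$ holds for any estimator, biased or not, under any sketch distribution and any noise law. Your handling of expectations also avoids the paper's implicit assumption that $\E[\mer(\bte)]$ is finite when it splits the expansion into three separate expectations. The paper's route gains nothing extra here beyond re-deriving the value $(n-d)\sigma^2$, which you already import from Claim~\ref{eq:claimexp*}.
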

\begin{proof}[Proof Sketch]
First observe 
$\E_{S, E} ||X (\bte - \hal_*)||^2 
= \E ||X \bte - Y||^2 + 
\E ||X \hal_* - Y||^2 
- 2 \E (X \hal_* - Y)^T (X \bte - Y)$, 
where the first two terms are $\E [\mer(\bte)]$ 
and $\E [\mer(\hal_*)]$ respectively. 
For the last term, by applying 
$\E [\bte] = \bal$ (Claim \ref{claim1}\ref{claim1:unbhte}), 
we can show $\E (X \hal_* - Y)^T (X \bte - Y) 
= (n-d) \sigma^2 = \E_{E} [\mer(\hal_*)]$, 
where the second equality follows 
from Claim \ref{eq:claimexp*}. 
\end{proof}

\begin{claim}
\label{claim:decomp_01}
$\E_{S \mid E} ||X \bte - X \hal_*||^2 
= \E || X \bte - X \bal||^2
+ ||X \bal - X \hal_*||^2 
= \frac{1}{k^2} \sum_{i=1}^k 
\E ||X (\hte_i - \hal_i)||^2 
+ ||X \bal - X \hal_*||^2$.
\end{claim}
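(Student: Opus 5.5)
The plan is to prove the two equalities in turn, conditioning throughout on $E$ so that $\hal_*$, $\bal$ and all $\hal_i$ are fixed vectors and the only randomness is in $S$.

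\textbf{First equality.} Write $X\bte - X\hal_* = (X\bte - X\bal) + (X\bal - X\hal_*)$ and expand the squared norm:
\begin{equation*}
\E_{S \mid E}||X\bte - X\hal_*||^2 = \E||X\bte - X\bal||^2 + ||X\bal - X\hal_*||^2 + 2(X\bal - X\hal_*)^T X(\E_{S\mid E}[\bte] - \bal).
\end{equation*}
The second term is deterministic given $E$. By Claim \ref{claim1}\ref{claim1:unbhte}, $\E_{S\mid E}[\bte]=\bal$, so the cross term vanishes. One point needs checking: the expectation must exist. For $m>d+1$ the inverse-Wishart-type moments involved are finite, and this is already implicit in Claim \ref{claim1}.

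\textbf{Second equality.} Write
\begin{equation*}
X\bte - X\bal = \frac1k\sum_{i=1}^k X(\hte_i-\hal_i),
\end{equation*}
so that
\begin{equation*}
\E||X\bte-X\bal||^2 = \frac{1}{k^2}\sum_{i,j}\E\big[(\hte_i-\hal_i)^TX^TX(\hte_j-\hal_j)\big].
\end{equation*}
For $i\neq j$, the vector $\hte_i$ depends on $S$ only through $S_i$, because $\hte_i=(X_i^TS_i^TS_iX_i)^{-1}X_i^TS_i^TS_iY_i$. Given $E$, the matrices $S_i$ and $S_j$ are independent. The $(i,j)$ cross expectation therefore factorizes as
\begin{equation*}
\E_{S_i}[\hte_i-\hal_i]^T X^TX\, \E_{S_j}[\hte_j-\hal_j].
\end{equation*}
Claim \ref{claim1}\ref{claim1:unbhte} gives $\E_{S_i}[\hte_i-\hal_i]=0$, so every off-diagonal term is zero. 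Only the diagonal terms $\frac1{k^2}\sum_i\E||X(\hte_i-\hal_i)||^2$ remain, which is the stated expression.

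\textbf{Main obstacle.} The only delicate step is justifying that the per-machine expectation equals the full expectation, $\E_{S\mid E}[\hte_i]=\E_{S_i}[\hte_i]=\hal_i$, together with the factorization across independent $S_i$ and $S_j$. Both follow from the conventions stated before the Roadmap and from Claim \ref{claim1}. Integrability of each term also has to hold so that expanding the expectation is legitimate; this is covered by $m>d+1$, the same condition that makes the diagonal terms finite in Claim \ref{claim2}.
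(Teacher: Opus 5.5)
Your proposal is correct and follows the paper's proof essentially line for line. The first equality comes from expanding around $X\bal$ and removing the cross term with $\E_{S\mid E}[\bte]=\bal$ (Claim~\ref{claim1}\ref{claim1:unbhte}), and the second from expanding $\frac{1}{k}\sum_i X(\hte_i-\hal_i)$ and removing the off-diagonal terms via the independence of $S_i,S_j$ together with $\E_{S\mid E}[\hte_i]=\hal_i$; your added remark that $m>d+1$ guarantees the needed moments exist is a small point of rigor the paper leaves implicit.
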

\begin{proof}[Proof Sketch]
Both equalities hold because 
the cross terms are zero. 
The first equality follows 
that $\E_{S \mid E} 
[\bte] = \bal$ (Claim \ref{claim1}\ref{claim1:unbhte}), 
and the second equality follows 
that $\hte_i, \hte_j$ are 
independent given $E$  
and $\E_{S \mid E} [\hte_i] = \hal_i$ 
(Claim \ref{claim1}\ref{claim1:unbhte}). 
\end{proof}

\begin{claim}[Adaptation of Results 
in \citep{bartan2023distributed} 
from $(X,Y)$ to $(X_i, Y_i)$]
\label{claim2}
Given $E$, we have 
\begin{enumerate}[label=(\alph*)]
\item $\E_{S \mid E} || X_i (\hte_i - \hal_i)||^2 
= \frac{d}{m-d-1} \mer_i(\hal_i)$. \label{claim2:lemma1}
\item $\E_{S \mid SX, E} 
(\hte_i - \hal_i) (\hte_i - \hal_i)^T 
= (X_i^T S_i^T S_i X_i)^{-1} 
\frac{1}{m} \mer_i(\hal_i)$. 
\label{claim2:htecovariance}
\item $\E_{S} (X_i^T S_i^T S_i X_i)^{-1} 
= (X_i^T X_i)^{-1} \frac{m}{m-d-1}$. 
\label{claim2:expinv}
\end{enumerate}
\end{claim}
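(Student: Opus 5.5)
I will prove the three parts in the order (b), (c), (a), since (a) follows from the other two. Throughout I fix $E$, so $\hal_i$ and $\hE_i$ are deterministic. Write $A_i = S_i X_i \in \R^{m\times d}$.

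\textbf{Closed form.} Since $Y_i = X_i\hal_i + \hE_i$ by (\ref{eq:htei*}), the normal equations for $\hte_i$ give
\begin{equation*}
\hte_i - \hal_i = (A_i^T A_i)^{-1} A_i^T S_i \hE_i .
\end{equation*}

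\textbf{Part (b).} By Claim \ref{claim1}\ref{claim1:indpEX}, $S_i\hE_i$ is independent of $A_i$. To see its distribution, note that each row $s_r^T$ of $S_i$ is $N(0, I_p/m)$. Then $(s_r^T X_i, s_r^T\hE_i)$ is jointly Gaussian with cross-covariance $X_i^T\hE_i/m = 0$ by Claim \ref{claim1}\ref{claim1:orthEX}. Hence $S_i\hE_i \sim N\big(0, \tfrac{\|\hE_i\|^2}{m} I_m\big)$, and $\|\hE_i\|^2 = \mer_i(\hal_i)$. Conditioning on $SX$ (equivalently on $A_i$), we get
\begin{equation*}
\E\big[(\hte_i-\hal_i)(\hte_i-\hal_i)^T \mid SX, E\big]
= (A_i^TA_i)^{-1}A_i^T \tfrac{\mer_i(\hal_i)}{m} I_m\, A_i (A_i^TA_i)^{-1}
= (A_i^TA_i)^{-1}\tfrac{\mer_i(\hal_i)}{m},
\end{equation*}
which is (b).

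\textbf{Part (c).} The rows of $A_i$ are i.i.d. $N(0, X_i^TX_i/m)$, so $A_i^TA_i$ is Wishart $W_d(X_i^TX_i/m,\, m)$. Applying the standard inverse-Wishart mean $\E[W^{-1}] = \Sigma^{-1}/(m-d-1)$ for $m>d+1$ with $\Sigma = X_i^TX_i/m$ gives
\begin{equation*}
\E_S (X_i^TS_i^TS_iX_i)^{-1} = \frac{m}{m-d-1}(X_i^TX_i)^{-1}.
\end{equation*}
Alternatively, one can write $A_i = G (X_i^TX_i)^{1/2}$ with $G$ having i.i.d. $N(0,1/m)$ entries, and reduce to the identity-covariance case.

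\textbf{Part (a).} Write
\begin{equation*}
\|X_i(\hte_i-\hal_i)\|^2 = \mtr\big[X_i^TX_i(\hte_i-\hal_i)(\hte_i-\hal_i)^T\big].
\end{equation*}
By the tower property, first take the conditional expectation given $SX$ using (b), then the expectation over $S_i$ using (c). This yields
\begin{equation*}
\frac{\mer_i(\hal_i)}{m}\cdot\frac{m}{m-d-1}\,\mtr[I_d] = \frac{d}{m-d-1}\mer_i(\hal_i).
\end{equation*}

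The only delicate step is the independence and distribution claim in (b). It needs the rowwise Gaussian argument together with the orthogonality $X_i^T\hE_i=0$, and it is valid because $\hE_i$ is fixed given $E$ and $S$ is independent of $E$. The rest is the standard inverse-Wishart moment.
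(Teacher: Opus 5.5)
Your proposal is correct and follows essentially the same route as the paper, which adapts Lemma 1 of \citep{bartan2023distributed}: part (b) from the conditional second moment given $SX$, part (c) from the inverse-Wishart mean, and part (a) by combining them via the tower property. Your rowwise argument that $S_i\hE_i \sim N(0, \frac{\mer_i(\hal_i)}{m} I_m)$ independently of $S_iX_i$ makes explicit the ``straight calculation'' the paper leaves implicit; note that conditional unbiasedness alone would not give the covariance in (b).
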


\begin{claim}
\label{claim3}
For any $i$, 
$\E_{S \mid E} \sum_{j \neq i} 
|| X_j (\hte_i - \hal_i)||^2 
= \frac{\Delta_i}{m-d-1} \mer_i(\hal_i)$, 
where $\Delta_i = \mtr [ \sum_{j \neq i} 
X_j^T X_j (X_i^T X_i)^{-1} ]$ and all 
sums are over $j = 1, \ldots, k$ 
with $j = i$ skipped. 
\end{claim}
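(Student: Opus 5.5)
We prove Claim \ref{claim3} by conditioning first on $S_iX_i$ (with $E$ fixed), so that Claim \ref{claim2}\ref{claim2:htecovariance} gives the conditional covariance of $\hte_i - \hal_i$. Then we integrate out $S_i$ using Claim \ref{claim2}\ref{claim2:expinv}.

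The first step rewrites each summand as a trace. For any fixed $j \neq i$,
\begin{equation*}
\| X_j (\hte_i - \hal_i)\|^2 = \mtr\left[ X_j^T X_j (\hte_i - \hal_i)(\hte_i - \hal_i)^T \right].
\end{equation*}
Since $X_j$ is fixed (fixed design), linearity of trace and expectation lets us move the expectation inside. By the tower property, $\E_{S \mid E} = \E_{S_iX_i}\,\E_{S \mid SX, E}$. Here we use that $\hte_i$ depends on $S$ only through $S_i$, so $\E_{S\mid E}$ reduces to $\E_{S_i \mid E}$, as noted before the roadmap. Applying Claim \ref{claim2}\ref{claim2:htecovariance} to the inner expectation gives
\begin{equation*}
\E_{S \mid SX, E}\, \| X_j (\hte_i - \hal_i)\|^2 = \frac{1}{m}\mer_i(\hal_i)\, \mtr\left[ X_j^T X_j (X_i^T S_i^T S_i X_i)^{-1} \right].
\end{equation*}
The factor $\mer_i(\hal_i)$ depends only on $E$, so it is a constant under $\E_{S\mid E}$.

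Next we take the outer expectation over $S_i$. Claim \ref{claim2}\ref{claim2:expinv} states $\E_{S}(X_i^TS_i^TS_iX_i)^{-1} = \frac{m}{m-d-1}(X_i^TX_i)^{-1}$, and it is valid since $m > d+1$. The factor $m$ cancels the $\frac1m$, giving
\begin{equation*}
\E_{S\mid E}\| X_j (\hte_i - \hal_i)\|^2 = \frac{\mer_i(\hal_i)}{m-d-1}\,\mtr\left[X_j^TX_j (X_i^TX_i)^{-1}\right].
\end{equation*}
Summing over $j \neq i$ and using linearity of the trace then yields $\frac{\Delta_i}{m-d-1}\mer_i(\hal_i)$, which is the claim.

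The only delicate point is justifying Claim \ref{claim2}\ref{claim2:htecovariance}, which we take as given. It rests on three facts. First, $\hte_i - \hal_i = (X_i^TS_i^TS_iX_i)^{-1}X_i^TS_i^TS_i\hE_i$, which follows from $Y_i = X_i\hal_i + \hE_i$. Second, $S_i\hE_i$ is independent of $S_iX_i$ and is distributed as $N(0, \frac1m\|\hE_i\|^2 I_m)$; this uses Claim \ref{claim1}\ref{claim1:orthEX} and \ref{claim1:indpEX}. Third, $\|\hE_i\|^2 = \mer_i(\hal_i)$. With these, the conditional covariance computation is routine. We must also ensure the expectation of the inverse Wishart is finite, which holds since $m > d+1$, and that $X_j$ is nonrandom so the trace passes through the expectation.
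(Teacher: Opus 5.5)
Your proposal is correct and matches the paper's proof: you write each summand as $\mtr[X_j^TX_j(\hte_i-\hal_i)(\hte_i-\hal_i)^T]$, apply Claim~\ref{claim2}\ref{claim2:htecovariance} conditionally on $(SX,E)$, integrate out $S_iX_i$ with Claim~\ref{claim2}\ref{claim2:expinv} so the factor $m$ cancels, and sum over $j\neq i$. Your added justification of Claim~\ref{claim2}\ref{claim2:htecovariance} is accurate: $S_i\hE_i$ is independent of $S_iX_i$ and distributed as $N(0,\frac{1}{m}\|\hE_i\|^2 I_m)$, with $\|\hE_i\|^2=\mer_i(\hal_i)$. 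It makes explicit what the paper leaves to the cited lemma.
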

\begin{proof}
Observe that $\E_{S \mid SX, E} 
|| X_j (\hte_i - \hal_i)||^2 
= \mtr [  X_j^T X_j \E (\hte_i 
- \hal_i) (\hte_i - \hal_i)^T]
= \mtr [ X_j^T X_j \frac{\mer_i(\hal_i)}{m}
(X_i^T S_i^T S_i X_i)^{-1} ]$, 
where the last equality follows 
Claim \ref{claim2}\ref{claim2:htecovariance}. 
Conditioned on $E$, 
taking expectation of the 
right side w.r.t. $SX$, 
\begin{equation}
\mtr [ X_j^T X_j \frac{\mer_i(\hal_i)}{m} 
\cdot \E_{SX \mid E} (X_i^T S_i^T S_i X_i)^{-1}]
= \mtr [  X_j^T X_j \frac{\mer_i(\hal_i)}{m}
\cdot \frac{m\, (X_i^T X_i)^{-1}}{m-d-1}] 
= \mtr[(X_j^T X_j)(X_i^T X_i)^{-1}] 
\frac{\mer_i(\hal_i)}{m-d-1},  
\end{equation}   
where the first equality is by Claim \ref{claim2}\ref{claim2:expinv}. 
Summing both sides over $j = 1, \ldots k $ 
with $j \neq i$ proves the claim. 
\end{proof}

\begin{claim}
\label{claim:error_p3}
$\E_{E} ||X \bal - X \hal_*||^2 
= (D - d) \sigma^2$.
\end{claim}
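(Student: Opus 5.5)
The plan is to compute $X\bal - X\hal_*$ explicitly as a linear function of the noise $E$. Then $\E_E\|\cdot\|^2$ reduces to a trace against $\sigma^2 I_n$, and that trace should collapse to $D-d$.

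\textbf{Closed forms.} Write $A_i = X_i^TX_i$ and $A = X^TX = \sum_{i=1}^k A_i$, using $\|Xz\|^2=\sum_i\|X_iz\|^2$ for the second equality. The normal equations and $Y_i = X_i\al_*+E_i$ give
\[
\hal_i = \al_* + A_i^{-1}X_i^TE_i, \qquad \hal_* = \al_* + A^{-1}X^TE = \al_* + A^{-1}\textstyle\sum_{i=1}^k X_i^TE_i .
\]
Hence $\al_*$ cancels in the difference:
\[
\bal - \hal_* = \sum_{i=1}^k B_i X_i^T E_i, \qquad B_i = \tfrac{1}{k}A_i^{-1} - A^{-1}.
\]

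\textbf{Taking expectation over $E$.} The blocks $E_i$ are independent with mean zero and covariance $\sigma^2 I_p$, so all cross terms between different blocks vanish. This gives
\[
\E_E\|X(\bal-\hal_*)\|^2 = \sigma^2\sum_{i=1}^k \mtr\!\left[A\,B_iX_i^TX_iB_i\right] = \sigma^2\sum_{i=1}^k \mtr\!\left[A\,B_iA_iB_i\right].
\]

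\textbf{Simplifying the trace.} Expanding the product yields
\[
B_iA_iB_i = \tfrac{1}{k^2}A_i^{-1} - \tfrac{2}{k}A^{-1} + A^{-1}A_iA^{-1}.
\]
Summing over $i$ and using $\sum_i A_i = A$ gives
\[
\sum_{i=1}^k B_iA_iB_i = \tfrac{1}{k^2}\sum_{i=1}^k A_i^{-1} - 2A^{-1} + A^{-1} = \tfrac{1}{k^2}\sum_{i=1}^k A_i^{-1} - A^{-1}.
\]
Multiplying by $A$ and taking the trace, we get
\[
\tfrac{1}{k^2}\sum_{i=1}^k \mtr[A A_i^{-1}] - d = \tfrac{1}{k^2}\sum_{i=1}^k\sum_{j=1}^k \mtr[A_jA_i^{-1}] - d = D - d.
\]
The last step is just the definition of $D$, after relabeling the indices $D_{ji}$ as $D_{ij}$ in the double sum. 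This proves the claim.

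\textbf{Main obstacle.} The only delicate step is the expansion and cancellation of $\sum_i B_iA_iB_i$. The two middle cross terms each contribute $-\tfrac{1}{k}A^{-1}$ per $i$, for a total of $-2A^{-1}$. The last term sums to $A^{-1}AA^{-1}=A^{-1}$. Tracking the factors of $k$ carefully is what makes the result come out exactly to $D-d$ rather than a $k$-dependent multiple. Everything else is routine, given invertibility of each $A_i$, which the paper assumes with probability one.
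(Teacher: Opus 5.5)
Your proof is correct and is essentially the paper's argument: the paper also writes $\hal_i$ and $\hal_*$ as $\al_*$ plus linear functions of the noise, and it evaluates the expected squared norm through traces against $\sigma^2 I$. The only difference is bookkeeping. The paper expands the square into three separate expectations ($D\sigma^2$, $d\sigma^2$, and a cross term equal to $d\sigma^2$), whereas you group by noise block through $B_i$; the three pieces of $\sum_i B_iA_iB_i$ correspond exactly to those three terms.
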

\begin{proof}[Proof Sketch]
The proof is based on straight 
calculations. 
\end{proof}

\section{Numerical Results}

In this section, we report the empirical 
performance of $\bte$ on three 
real-world data sets loaded from 
the Python scikit-learn 
datasets library, including \textit{Digit}, \textit{California Housing (House)} 
and \textit{Covertype (Cover)}. On each 
set, features that have identical values 
across all data points are removed;  
the Cover data set was downsampled 
to 100k points for efficiency. 
Statistics of the preprocessed 
data sets are shown in the following table. 
All reported results were averaged 
over 100 random choices of the sketching 
matrices on Digit and House, and over 10 
random choices on Cover. 

\begin{table}[t!]
\centering
\setstretch{1}
\begin{tabular}{lcc}
\toprule
\textbf{Dataset} & \#\textbf{Instance} 
& \#\textbf{Feature} \\ \midrule
Digit & 1797 & 61 \\
House & 20640 & 8 \\
Cover & 100000 & 53 \\
\bottomrule
\end{tabular}
\end{table}

We focus on evaluating the impact of $k$ 
on the expected excess loss (which is 
estimated using the given data). 
Other hyper-parameters were 
set as follows: $m = d + 2$, all sketching matrices have entries i.i.d. sampled from 
$N(0, 1/m)$; to build $\bte$, 
each data set was evenly partitioned based 
on the default data order, and the few extra 
$n - pk$ points were abandoned. The loss 
versus $k$ is shown in Figure \ref{fig:error}. 
Results show that $\bte$ 
outperforms $\bbe$, which confirms the 
theoretical implication that $B_{\te} \leq 
B_{\be}$ if data are sampled i.i.d. (which 
is often satisfied in practice). In addition, 
their performance gap first increases and 
then decreases, which is consistent with 
the theoretical implication of (\ref{eq:gap}). 

\begin{figure}
\centering
\includegraphics[width=.32\linewidth]{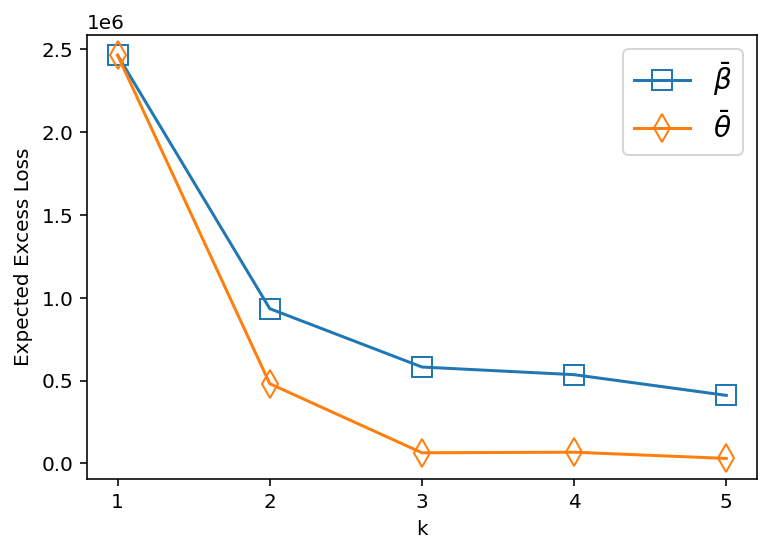}
\includegraphics[width=.32\linewidth]{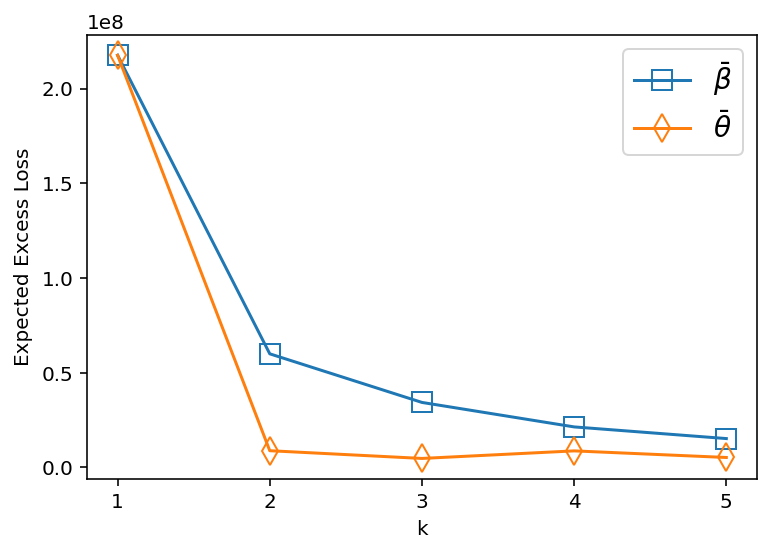}
\includegraphics[width=.32\linewidth]{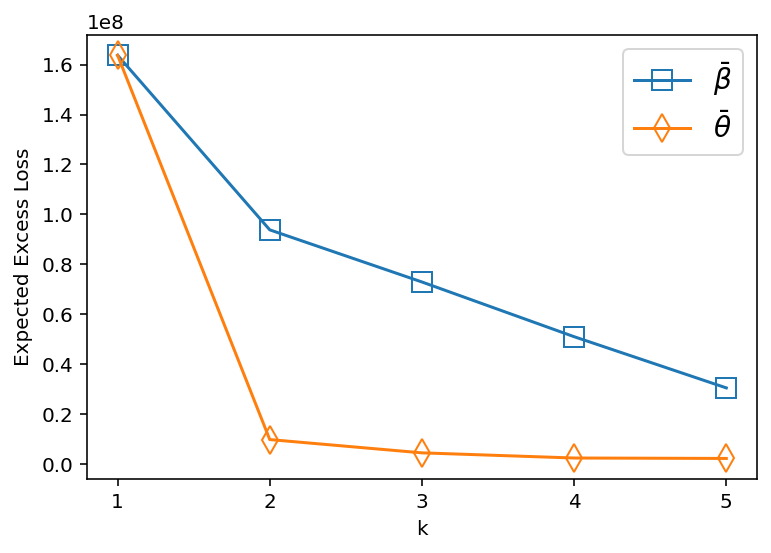}
\vspace{-9pt}
\caption{Excess Loss versus $k$ on 
Digit (left), House (mid) and Cover (right)}
\label{fig:error}
\end{figure}

Figure \ref{fig:time} shows the 
average time for building an estimator 
from a sketch, including the generation 
and application of sketching matrices. 
As expected, $\hbe_i$ consumes more time 
than simply performing OLS on $(X,Y)$, 
while $\hte_i$ effectively reduces training 
time as $k$ increases. These observations 
align with the time complexity analysis: 
(i) standard OLS takes $O(n d^2)$ time, 
(ii) $\hbe_i$ 
applies Gaussian sketching on the 
whole data set which takes $O(mdn)$ 
time, (iii)  $\hte_i$ applies 
Gaussian sketching on a subset 
which takes $O(mdn/k)$ time. 
For large $n$ and small $d$, $\hbe_i$ 
would indeed take more time than OLS, 
while $\hte_i$ can remain 
more efficient with large $k$. 
In addition, the analysis 
suggests $\hte_i$ time decreases 
polynomially fast, 
as supported by Figure \ref{fig:time} 
(i.e., $\hte_i$ curve is almost straight 
under log-scale y-axis). 

\begin{figure}
\centering
\includegraphics[width=.32\linewidth]{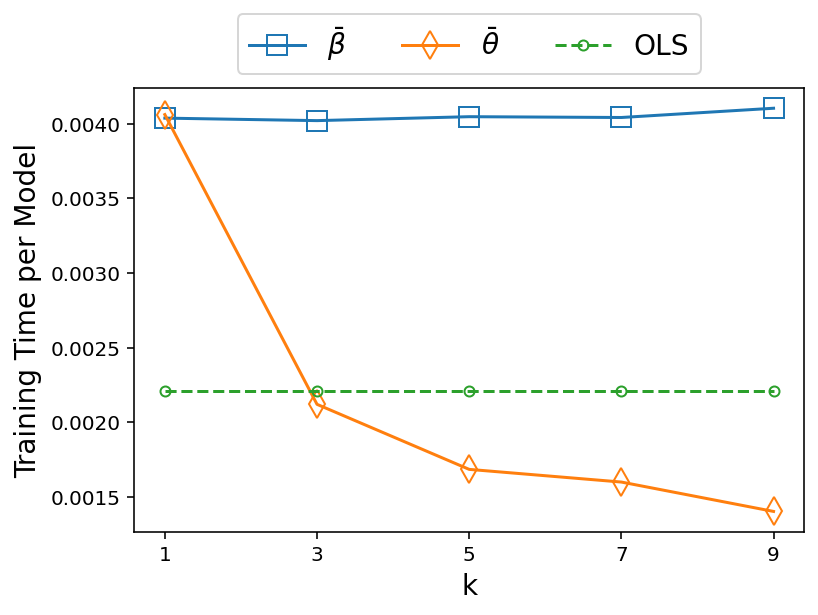}
\includegraphics[width=.32\linewidth]{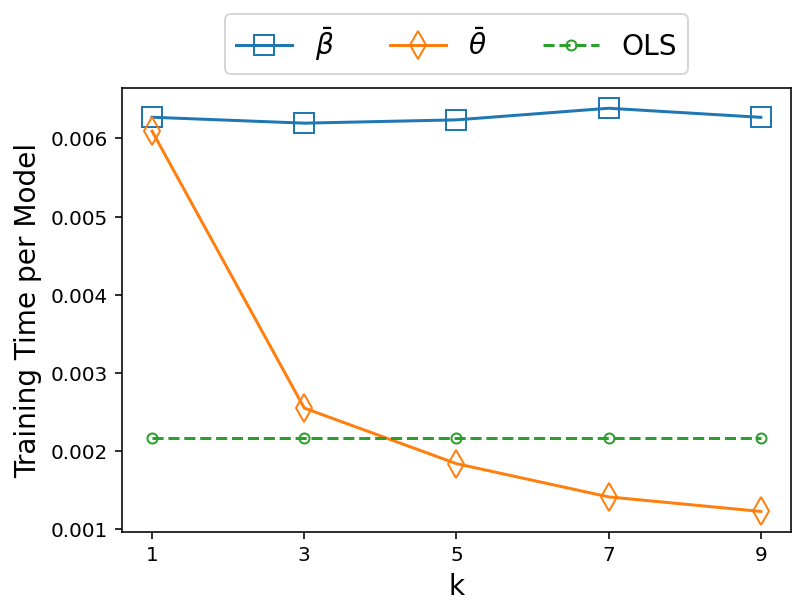}
\includegraphics[width=.32\linewidth]{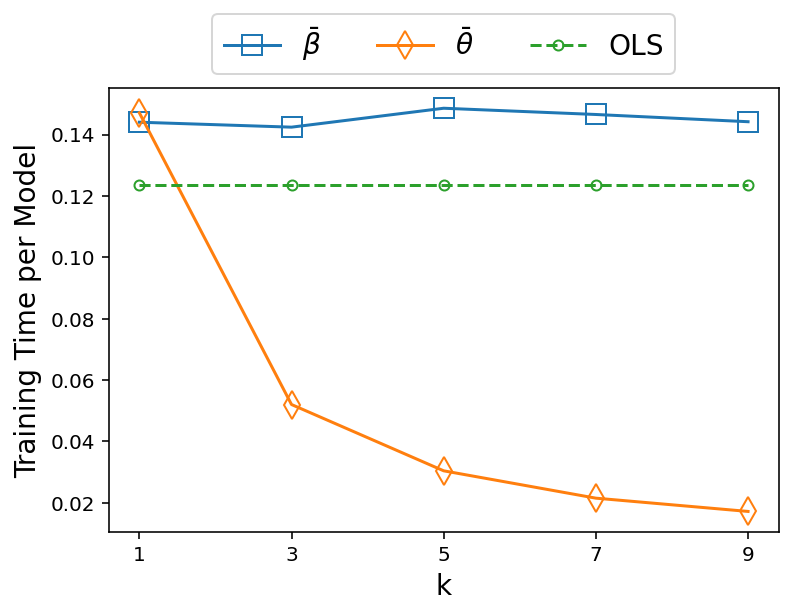}
\vspace{-9pt}
\caption{Training Time per estimator versus 
$k$ on Digit (left), House (mid) and 
Cover (right)}
\label{fig:time}
\vspace{-5pt}
\end{figure}

\bibliographystyle{cas-model2-names}
\bibliography{reference}

\label{LastMainPage}   

\clearpage
\appendix
\pagenumbering{arabic}

\renewcommand{\lastpage}{\pageref{LastAppendixPage}}

\section{Appendix: Proofs of Supporting Claims in Section \ref{sec:supportclaim}}

\vspace{5pt}
\begin{claim*}[\ref{claim1}] 
Given $E$, the following hold: 
\begin{enumerate}[label=(\alph*)]
\item $X_i^T \hE_{i} = 0$. 
\item $S_i X_i$ and $S_i \hE_{i}$ are independent. 
\item $\E_{S \mid SX, E} [\hte_i] = 
\E_{S \mid E} [\hte_i] = \hal_{i}$  
and $\E_{S \mid E} [\bte] = \bal$.
\end{enumerate}
\end{claim*}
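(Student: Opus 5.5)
Part (a) follows from the normal equations. Since $X_i^TX_i$ is invertible, $\hal_i = (X_i^TX_i)^{-1}X_i^TY_i$. Hence
\[
X_i^T\hE_i = X_i^TY_i - X_i^TX_i(X_i^TX_i)^{-1}X_i^TY_i = 0 .
\]
Equivalently, $\hE_i = (I_p - P_i)Y_i$, where $P_i = X_i(X_i^TX_i)^{-1}X_i^T$ is the orthogonal projector onto the column space of $X_i$. Given $E$, both $X_i$ and $\hE_i$ are fixed, and the only randomness is in $S_i$.

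For part (b), I write $S_i = \frac{1}{\sqrt m}G$, where $G \in \R^{m\times p}$ has i.i.d. $N(0,1)$ entries. The row $r$ of $S_iX_i$ is $g_r^TX_i/\sqrt m$, and the $r$-th entry of $S_i\hE_i$ is $g_r^T\hE_i/\sqrt m$, where $g_r$ is the $r$-th row of $G$. All these quantities are linear in the Gaussian vector $\mathrm{vec}(G)$, so $(S_iX_i, S_i\hE_i)$ is jointly Gaussian. Rows with different $r$ are independent. Within a row, the covariance is
\[
\frac1m \mathbb{E}\bigl[X_i^Tg_r g_r^T\hE_i\bigr] = \frac1m X_i^T\hE_i = 0
\]
by (a). Jointly Gaussian and uncorrelated implies independent, so $S_iX_i$ and $S_i\hE_i$ are independent given $E$.

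For part (c), I expand $\hte_i$. Since $p \ge m > d+1$ and the data matrices have full rank with probability one, $S_iX_i$ has full column rank almost surely. Then
\[
\hte_i = (X_i^TS_i^TS_iX_i)^{-1}X_i^TS_i^TS_iY_i .
\]
Substituting $Y_i = X_i\hal_i + \hE_i$ gives
\[
\hte_i = \hal_i + (X_i^TS_i^TS_iX_i)^{-1}(S_iX_i)^T(S_i\hE_i).
\]
The correction term therefore depends on $S_i$ only through $A = S_iX_i$ and $b = S_i\hE_i$. Conditioning on $SX$, and hence on $A$, the vector $b$ keeps its distribution $N\bigl(0, \tfrac1m\|\hE_i\|^2 I_m\bigr)$ because of (b), so it has mean zero. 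This gives
\[
\E_{S\mid SX,E}[\hte_i] = \hal_i + (A^TA)^{-1}A^T\,\E[b] = \hal_i .
\]

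The step I expect to need care is justifying that the conditional expectation exists and that the tower property can be applied. The conditional expectation is finite since $b$ is Gaussian and $A$ is fixed, and it is almost surely well defined because $A^TA$ is invertible almost surely. Applying the tower property then gives $\E_{S\mid E}[\hte_i] = \hal_i$; this requires $\E\|\hte_i\| < \infty$. That integrability follows from Claim~\ref{claim2}\ref{claim2:expinv}, since $\E(A^TA)^{-1}$ is finite for $m > d+1$. Finally, linearity of expectation gives
\[
\E_{S\mid E}[\bte] = \frac1k\sum_{i=1}^k \hal_i = \bal .
\]
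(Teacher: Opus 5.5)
Your proof is correct and follows essentially the same route as the paper's: (a) from the normal equations, (b) from joint Gaussianity plus zero cross-covariance given by (a), and (c) from the decomposition $\hte_i = \hal_i + (X_i^TS_i^TS_iX_i)^{-1}X_i^TS_i^TS_i\hE_i$, the independence in (b), and the tower property. Your row-by-row cross-covariance computation and your integrability check are extra rigor the paper omits. The integrability check uses the inverse-Wishart moment of Claim~\ref{claim2}\ref{claim2:expinv}, and this is not circular because that part does not depend on Claim~\ref{claim1}.
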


\begin{proof}
To prove \ref{claim1:orthEX}, 
recall $\hal_i$ is a 
minimizer of $L_i(\al) = ||X_i 
\al - Y_i||^2$. Therefore 
\begin{equation}
\left.\frac{\partial L_i}{\partial \al} 
\right|_{\al = \hal_i}
= 2 X_i^T X_i \hal_i 
- 2 X_i^T Y_i  = 
2 X_i^T X_i \hal_i 
- 2 X_i^T (X_i \hal_i + \hE_i)
= - 2 X_i^T \hE_i = 0. 
\end{equation}

Claim \ref{claim1:indpEX} is 
due to a well-known property that 
jointly-Gaussian variables are 
independent if they are uncorrelated. 
Specifically, given $E$, 
both $S_i X_i$ and $S_i \hE_i$ 
are jointly Gaussian because each 
of them is a linear combination of 
independent Gaussian entries in $S_i$. 
In addition, with Claim 
\ref{claim1:orthEX}, it is not 
hard to see their correlation 
satisfies 
\begin{equation}
corr(S_i X_i, S_i \hE_i \mid E, X) 
\propto \E_{S \mid E} 
[X_i^T S_i^T S_i \hE_i] = 
X_i^T \hE_i = 0.     
\end{equation}

To prove \ref{claim1:unbhte}, 
recall $\hte_i = \arg\min_{\al} 
||S_i X_i \al - S_i Y_i||^2$ 
and $Y_i = X_i \hal_i + \hE_i$. Then, 
\begin{equation}
\label{eq:claim_expectation}
\begin{split}
\hte_i & = (X_i^T S_i^T S_i X_i)^{-1} 
X_i^T S_i^T S_i (Y_i)\\[.1em] 
& = (X_i^T S_i^T S_i X_i)^{-1} 
X_i^T S_i^T S_i (X_i \hal_i + \hE_i)\\[.1em] 
& = \hal_i + (X_i^T S_i^T S_i X_i)^{-1} 
X_i^T S_i^T S_i \hE_i.  
\end{split}
\end{equation}
Taking expectation on both sides 
w.r.t. $S$, conditioned on $(SX, E)$, 
we have 
\begin{equation}
\label{eq:claim1_prf_01av}
\E_{S \mid SX, E} [\hte_i] = 
\hal_i + (X_i^T S_i^T S_i X_i)^{-1} 
X_i^T S_i^T\, 
\E_{S \mid SX, E}[S_i \hE_i].    
\end{equation}
On the right side, the second term 
is zero based on the following 
argument 
\begin{equation}
\E_{S \mid SX, E}[S_i \hE_i]
= \E_{S \mid E}[S_i \hE_i]
= \E_{S \mid E}[S_i]\, \hE_i 
= \vec{0},  
\end{equation}
where the first equality holds 
because $S_i \hE_i$ and $S_i X_i$ 
are independent (due to Claim 
\ref{claim1:indpEX}), 
and the second equality holds 
because $\hE_i$ is fixed given 
$E$. Based on the above, we have 
\begin{equation}
\E_{S \mid E}[\hte_i] 
= E_{SX \mid E} ( \E_{S \mid SX, E}[\hte_i]) = 
E_{SX \mid E}[\hal_i] = \hal_i. 
\end{equation}
This proves the first part of 
\ref{claim1:unbhte}. The second 
part $\E_{S \mid E} [\bte] = \bal$ 
follows trivially. 
\end{proof}

\begin{claim*}[\ref{eq:claimexp*}]
$\E_{E} [\mer(\hal_*)] 
= (n-d) \sigma^2$\quad  
and\quad  
$\E_{E}[\mer_i(\hal_i)] 
= (p-d) \sigma^2$.    
\end{claim*}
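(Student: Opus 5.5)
The plan is to write the residual of the full-data OLS fit as a projection of the noise and then compute its expected squared norm by a trace identity. Let $H = X(X^TX)^{-1}X^T$ be the hat matrix of $X$, which exists with probability one by the full-rank assumption, and let $G = H - I_n$.

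\textbf{Step 1: express the residual through $G$.} Plugging $\hal_* = (X^TX)^{-1}X^TY$ and $Y = X\al_* + E$ into the loss gives
\begin{equation*}
X\hal_* - Y = HX\al_* + HE - X\al_* - E = GE,
\end{equation*}
since $HX = X$. Hence $\mer(\hal_*) = ||GE||^2 = \mtr[G E E^T G^T]$.

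\textbf{Step 2: take the expectation.} Because $X$ is fixed, $G$ is deterministic, and linearity of trace and expectation gives $\E_E[\mer(\hal_*)] = \mtr[G\,\E(EE^T)\,G^T]$. The noises are i.i.d. with mean zero and variance $\sigma^2$, so $\E[EE^T] = \sigma^2 I_n$. The expression therefore reduces to $\sigma^2\,\mtr[GG^T]$.

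\textbf{Step 3: evaluate $\mtr[GG^T]$.} This is the only step needing a small check. $H$ is symmetric and idempotent, since $H^2 = X(X^TX)^{-1}X^TX(X^TX)^{-1}X^T = H$. Consequently $G$ is symmetric and
\begin{equation*}
GG^T = G^2 = H^2 - 2H + I_n = I_n - H,
\end{equation*}
so $\mtr[GG^T] = n - \mtr[H]$. By the cyclic property of the trace,
\begin{equation*}
\mtr[H] = \mtr[(X^TX)^{-1}X^TX] = \mtr[I_d] = d.
\end{equation*}
This gives $\E_E[\mer(\hal_*)] = (n-d)\sigma^2$.

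\textbf{Step 4: the subset statement.} The second identity follows by the same argument with $(X, Y, E, \hal_*)$ replaced by $(X_i, Y_i, E_i, \hal_i)$ and $n$ replaced by $p$. This substitution is valid because:
\begin{enumerate}[label=(\roman*)]
\item $Y_i = X_i\al_* + E_i$;
\item $X_i^TX_i$ is invertible by assumption;
\item $E_i$ consists of $p$ i.i.d. noises, so $\E[E_iE_i^T] = \sigma^2 I_p$;
\item the trace of the subset hat matrix $X_i(X_i^TX_i)^{-1}X_i^T$ is again $d$.
\end{enumerate}
Together these yield $\E_E[\mer_i(\hal_i)] = (p-d)\sigma^2$.

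No step is a real obstacle: the argument is a direct calculation once the residual is written as $GE$.
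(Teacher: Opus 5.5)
Your proposal is correct and follows essentially the same route as the paper: write the residual as $GE$ with $G = X(X^TX)^{-1}X^T - I_n$, use $\E[EE^T]=\sigma^2 I_n$ to reduce the expected loss to $\sigma^2\,\mathrm{tr}[GG^T] = \sigma^2(n-d)$, and then repeat the computation with $(X_i, Y_i, \hal_i)$. The only cosmetic difference is how the trace is evaluated: you collapse $GG^T$ to $I_n - H$ via idempotence, while the paper expands $\mathrm{tr}[GG^T]$ term by term.
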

\begin{proof} 
Let $G = X(X^TX)^{-1}X^T - I_n$. 
The first part of the claim can be 
verified as follows 
\begin{equation}
\label{eq:app_prf01}
\E_{E} 
[\mer(\hal_*)] = \E ||X \hal_* - Y||^2 
= \E ||G E||^2 = \E \mtr[G EE^T G^T]
= \sigma^2 \mtr[GG^T] = \sigma^2 (n-d).  
\end{equation}
In (\ref{eq:app_prf01}), the second equality 
is
based on the following arguments 
\begin{equation}
\begin{split}
||X \hal_* - Y||^2 
& = ||X (X^TX)^{-1} X^T Y - Y||^2\\[.1em] 
& = ||X (X^TX)^{-1} X^T (X \al_* + E) 
- (X \al_* + E)||^2\\[.1em]
& = ||X \al_* + X (X^TX)^{-1} X^T E  
- X \al_* - E ||^2\\[.1em] 
& = ||G E||^2,     
\end{split}
\end{equation}
where the first equality is by 
the fact that $\hal_*$ minimizes 
$||X\al-Y||^2$, and the second 
is by the fact that $Y = X \al_* + E$. 
In (\ref{eq:app_prf01}), the 
last equality is by straight 
calculation i.e., 
\begin{equation}
\begin{split}
\mtr[GG^T] & = \mtr[ X(X^TX)^{-1}X^T 
X(X^TX)^{-1}X^T] - 2 
\mtr[X(X^TX)^{-1}X^T] + \mtr[I_n]\\[.1em] 
& = \mtr[I_d] - 2 \mtr[I_d] + \mtr[I_n]\\[.1em] 
& = n-d.        
\end{split}
\end{equation}
The second part of the claim can be 
verified in the same way, 
with $(X,Y, \hal_*)$ replaced 
by $(X_i,Y_i, \hal_i)$. 
\end{proof}

\begin{claim*}[\ref{lem2}]
$\E_{S, E} [\mer(\bte)] - 
\E_{E} [\mer(\hal_*)] = 
\E_{S, E} ||X (\bte - \hal_*)||^2$.
\end{claim*}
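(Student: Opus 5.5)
The plan is to expand the square and show that the cross term equals $\E_E[\mer(\hal_*)]$. Write $X(\bte - \hal_*) = (X\bte - Y) - (X\hal_* - Y)$, so
\begin{equation*}
\E_{S,E}||X(\bte-\hal_*)||^2 = \E[\mer(\bte)] + \E[\mer(\hal_*)] - 2\,\E_{S,E}\big[(X\hal_*-Y)^T(X\bte - Y)\big].
\end{equation*}
The claim is therefore equivalent to $\E_{S,E}[(X\hal_*-Y)^T(X\bte-Y)] = \E_E[\mer(\hal_*)]$, which by Claim \ref{eq:claimexp*} equals $(n-d)\sigma^2$.

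To evaluate the cross term, I would first condition on $E$. The vector $X\hal_* - Y$ is fixed given $E$, and $X\bte - Y$ is affine in $\bte$. By Claim \ref{claim1}\ref{claim1:unbhte} we have $\E_{S\mid E}[\bte] = \bal$, so
\begin{equation*}
\E_{S\mid E}\big[(X\hal_*-Y)^T(X\bte-Y)\big] = (X\hal_*-Y)^T(X\bal - Y).
\end{equation*}
Next I would split $X\bal - Y = X(\bal - \hal_*) + (X\hal_* - Y)$. The residual $X\hal_* - Y = -(Y - X\hal_*)$ is orthogonal to the column space of $X$ by the normal equations $X^T(Y - X\hal_*) = 0$, which is the same argument as in Claim \ref{claim1}\ref{claim1:orthEX} but applied to the full data. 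The term $(X\hal_*-Y)^T X(\bal-\hal_*)$ therefore vanishes identically, for every $E$. What remains is $||X\hal_*-Y||^2 = \mer(\hal_*)$.

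Taking $\E_E$ of this identity gives exactly $\E_E[\mer(\hal_*)]$. Substituting back into the expansion turns the right side into $\E[\mer(\bte)] + \E[\mer(\hal_*)] - 2\E[\mer(\hal_*)]$, which is the claimed identity. The only point that needs care is justifying that the conditional expectation passes through the linear map. This requires $\E_{S\mid E}[\bte]$ to exist, i.e.\ integrability of $(X_i^TS_i^TS_iX_i)^{-1}$. That holds for $m > d+1$ by Claim \ref{claim2}\ref{claim2:expinv}, which is assumed in Theorem \ref{thm:main}. With that in place, I expect no real obstacle; the orthogonality step is the key observation. Note also that Claim \ref{eq:claimexp*} is not actually needed for the identity itself, only to express the value as $(n-d)\sigma^2$.
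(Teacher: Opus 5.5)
Your proposal is correct and follows essentially the same route as the paper. Both proofs expand the square, condition on $E$ to replace $\bte$ by $\bal$ via Claim~\ref{claim1}\ref{claim1:unbhte}, and then eliminate the dependence on $\bal$ using the normal equations $X^T(Y - X\hal_*) = 0$.

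The only difference is at the end. The paper computes $\E_E[Y^TY] - \E_E[\hal_*^TX^TY] = (n-d)\sigma^2$ from the noise moments and matches it with Claim~\ref{eq:claimexp*}. You instead observe that the conditional cross term equals $\mer(\hal_*)$ for every $E$, which is shorter and shows the identity holds conditionally on each $E$ without using the noise distribution.
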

\begin{proof} 
First observe 
\begin{equation}
\label{claim:charloss_01}
\E_{S, E} ||X (\bte - \hal_*)||^2 
= \E ||X \bte - Y||^2 + 
\E ||X \hal_* - Y||^2 
- 2 \E (X \hal_* - Y)^T (X \bte - Y). 
\end{equation}    
The first two terms are $\E [\mer(\bte)]$ 
and $\E [\mer(\hal_*)]$ respectively. 
It remains to show the last 
expectation is $\E [\mer(\hal_*)]$. 
First, 
\begin{equation}
\label{eq:sup_prf02}
\begin{split}
\E_{S, E} (X \hal_* - Y)^T (X \bte - Y) 
& = \E_{E} \left[ \E_{S \mid E} 
(X \hal_* - Y)^T (X \bte - Y) \right] \\[.1em]
& = \E_{E} (X \hal_* - Y)^T (X 
\E_{S \mid E} [\bte] - Y) \\[.1em]
& = \E_{E} 
(X \hal_* - Y)^T (X \bal - Y) \\[.1em]
& = \E \hal_*^T X^T  X \bal + 
\E Y^T Y - \E Y^T X \bal - \E \hal_*^T 
X^T Y\\[.1em]
& = \E Y^T Y - \E \hal_*^T X^T Y.     
\end{split}
\end{equation}
Above, the second equality is 
by the fact that only $\bte$ depends 
on $S$; the third equality follows 
$\E_{S \mid E} [\bte] = \bal$ 
in Claim \ref{claim1}\ref{claim1:unbhte};  
the last equality holds because 
(in its above line) the first and 
third terms cancel each other i.e., 
\begin{equation}
\hal_*^T X^T X \bal 
= Y^T X (X^TX)^{-1} X^T X \bal
= Y^T X \bal. 
\end{equation}
It remains to show the right 
side of (\ref{eq:sup_prf02}) 
equals $\E[L(\hal_*)]$. 
We will evaluate the two terms 
separately. 

To evaluate the first term $\E Y^T Y$, 
recall $Y = X \al_* + E$. Then, 
\begin{equation}
\label{eq:term1}
\begin{split}
\E_{E} [Y Y^T] 
& = \E (X \al_* + E) (X \al_* + E)^T \\[.1em] 
& = \E [ X \al_* \al_*^T X^T + E E^T 
+ X \al_* E^T + E \al_*^T X^T ] \\[.1em] 
& = X \al_* \al_*^T X^T + \E [E E^T] \\[.1em] 
& = X \al_* \al_*^T X^T 
+ \sigma^2 I_n.         
\end{split}
\end{equation}
Above, the third equality holds 
because only $E$ is random 
and has zero mean; the 
last equality holds because $E$ has 
i.i.d. entries following $N(0, \sigma^2)$. 
Based on above, we have 
\begin{equation}
\E_{E} [Y^T Y]
= \E \mtr[Y Y^T]
= \mtr[\E (Y Y^T)]
= \mtr[X \al_* \al_*^T X^T 
+ \sigma^2 I_n] 
= \al_*^T X^T X \al_* + n \sigma^2. 
\end{equation}
To evaluate the second term 
$\E \hal_*^T X^T Y$, recall $\hal_*$ 
minimizes $||X \al - Y||^2$. Then, 
\begin{equation}
\label{eq:term2}
\begin{split}
\E_{E} [\hal_*^T X^T Y] 
= \E Y^TX (X^TX)^{-1} X^T Y 
& = \E \mtr[X (X^TX)^{-1} X^T YY^T] \\[.1em]
& = \mtr[X (X^TX)^{-1} X^T \E(YY^T)] \\[.1em]
& = \mtr[X (X^TX)^{-1} X^T 
(X \al_* \al_*^T X^T + \sigma^2 I_n)] \\[.1em]
& = \mtr[X \al_* \al_*^T X^T 
+ \sigma^2 X (X^TX)^{-1} X^T)] \\[.1em]
& = \al_*^T X^T X \al_*  
+ \sigma^2 \mtr [I_d] \\[.1em]
& = \al_*^T X^T X \al_*  + d \sigma^2,  
\end{split}
\end{equation}
where the 5th equality follows 
the above evaluation of $\E[YY^T]$. 
Plugging (\ref{eq:term1}, \ref{eq:term2}) 
back to (\ref{eq:sup_prf02}), 
we have 
\begin{equation}
\E_{S, E} (X \hal_* - Y)^T 
(X \bte - Y) = \E Y^T Y - \E \hal_*^T X^T Y 
= (n-d) \sigma^2 = \E_{E \mid X} 
[\mer(\hal_*)], 
\end{equation}
where the last equality is 
due to Claim \ref{eq:claimexp*}. 
Putting it back to (\ref{claim:charloss_01}), 
we have 
\begin{equation}
\begin{split}
\E_{S, E \mid X} ||X (\bte - \hal_*)||^2 
& = \E ||X \bte - Y||^2 + 
\E ||X \hal_* - Y||^2 
- 2 \E (X \hal_* - Y)^T (X \bte - Y)\\[.1em] 
&  = L(\bte) + L(\hal_*) - 2 L(\hal_*) \\[.1em] 
& = L(\bte) - L(\hal_*).     
\end{split}
\end{equation}    
This proves the lemma. 
\end{proof}

\begin{claim*}[\ref{claim:decomp_01}]
$\E_{S \mid E} ||X \bte - X \hal_*||^2 
= \frac{1}{k^2} \sum_{i=1}^k 
\E ||X (\hte_i - \hal_i)||^2 
+ ||X \bal - X \hal_*||^2$.
\end{claim*}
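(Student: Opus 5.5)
We prove both equalities by expanding a squared norm around a conditional mean and showing that the cross terms vanish. Throughout, $E$ is fixed, so $\hal_i$, $\bal$ and $\hal_*$ are deterministic, and the only randomness comes from $S=\{S_1,\ldots,S_k\}$.

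\emph{First equality.} Write $X\bte - X\hal_* = (X\bte - X\bal) + (X\bal - X\hal_*)$ and expand the squared norm:
\begin{equation*}
\E_{S\mid E}||X\bte - X\hal_*||^2 = \E||X\bte - X\bal||^2 + ||X\bal - X\hal_*||^2 + 2(\bal-\hal_*)^T X^TX\, \E_{S\mid E}[\bte - \bal].
\end{equation*}
Here we used that $X\bal - X\hal_*$ does not depend on $S$, so it can be pulled out of the expectation. By Claim \ref{claim1}\ref{claim1:unbhte}, $\E_{S\mid E}[\bte]=\bal$, so the cross term is zero. This gives the first equality.

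\emph{Second equality.} Since $\bte-\bal = \frac{1}{k}\sum_{i=1}^k (\hte_i-\hal_i)$, we have
\begin{equation*}
\E||X(\bte-\bal)||^2 = \frac{1}{k^2}\sum_{i=1}^k \E||X(\hte_i-\hal_i)||^2 + \frac{1}{k^2}\sum_{i\neq j}\E\big[(\hte_i-\hal_i)^TX^TX(\hte_j-\hal_j)\big].
\end{equation*}
To kill the off-diagonal terms, we use the following facts:
\begin{enumerate}[label=(\roman*)]
\item Given $E$, $\hte_i$ is a measurable function of $S_i$ alone, through $S_iX_i$ and $S_iY_i$ with $Y_i$ fixed.
\item The matrices $S_1,\ldots,S_k$ are mutually independent, so $\hte_i$ and $\hte_j$ are conditionally independent given $E$ for $i\neq j$.
\end{enumerate}
Hence each off-diagonal expectation factorizes as
\begin{equation*}
(\E_{S\mid E}[\hte_i]-\hal_i)^T X^TX (\E_{S\mid E}[\hte_j]-\hal_j),
\end{equation*}
and this is zero by Claim \ref{claim1}\ref{claim1:unbhte}.

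\emph{Main obstacle.} The only delicate point is integrability. The factorization and the expansion both require $\hte_i$ to have finite first and second moments conditional on $E$. This follows from Claim \ref{claim2}\ref{claim2:htecovariance}--\ref{claim2:expinv}: given $S X$ and $E$, the conditional covariance of $\hte_i$ is $(X_i^TS_i^TS_iX_i)^{-1}\mer_i(\hal_i)/m$. Its expectation over $SX$ is finite because $m>d+1$, which is exactly the moment condition for an inverse Wishart matrix. Therefore all the expectations above are finite, the expansions are legitimate, and the claim follows.
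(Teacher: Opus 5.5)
Your proposal is correct and follows essentially the same route as the paper. You use the same two expansions: the first cross term vanishes by $\E_{S\mid E}[\bte]=\bal$, and the off-diagonal terms vanish by conditional independence of $\hte_i,\hte_j$ together with $\E_{S\mid E}[\hte_i]=\hal_i$ from Claim~\ref{claim1}. Your integrability remark (finite second moments via Claim~\ref{claim2} and $m>d+1$) is a valid justification that the paper leaves implicit.
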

\begin{proof}
Since neither $\bal$ or $\hal_*$ 
depends on $S$, we have 
\begin{equation}
\label{eq:claim44_b}
\begin{split}
\E_{S \mid E} ||X \bte - X \hal_*||^2
& = \E ||X \bte - X \bal + X \bal 
- X \hal_*||^2\\[.1em] 
& = \E ||X \bte - X \bal||^2 
+ ||X \bal - X \hal_*||^2 + 
2 \E (\bte - \bal)^T X^T 
X (\bal - \hal_*)\\[.1em] 
& = \E ||X \bte - X \bal||^2 
+ ||X \bal - X \hal_*||^2 + 
2 (\E [\bte] - \bal)^T X^T 
X (\bal - \hal_*)\\[.1em] 
& = \E ||X \bte - X \bal||^2 
+ ||X \bal - X \hal_*||^2. 
\end{split}
\end{equation}    
Above, the third equality 
holds because only $\bte$ depends 
on $S$, and the last equality 
holds because $\E_{S \mid E} 
[\bte] = \bal$ by Claim \ref{claim1}\ref{claim1:unbhte}. 
For the first term, by the designs 
of averaged estimators, 
\begin{equation}
\label{eq:claim44_a}
\begin{split}
\E_{S \mid E} ||X \bte - X \bal||^2 
& = \E || \frac{1}{k}\sum_{i=1}^k 
X (\hte_i - \hal_i)||^2 \\
& = \E \frac{1}{k^2} \sum_{i=1}^k 
|| X (\hte_i - \hal_i)||^2 + \E 
\frac{1}{k^2} \sum_{i=1}^k \sum_{j \neq i} 
(\hte_i - \hal_i)^T X^T 
X (\hte_j - \hal_{j})\\
& = \E \frac{1}{k^2} \sum_{i=1}^k 
|| X (\hte_i - \hal_i)||^2.  
\end{split}
\end{equation} 
Above, the double sum is zero for two reasons: 
(i) $\hte_i, \hte_j$ are independent 
because they are respectively built on 
$S_i, S_j$ which are independent; (ii) 
Claim \ref{claim1}\ref{claim1:unbhte} 
says $\E_{S \mid E} [\hte_i] = \hal_i$ 
for any $i$; hence,
every term in the double sum satisfies 
\begin{equation}
\E_{S \mid E} 
(\hte_i - \hal_i)^T X^T X (\hte_j - \hal_{j})
= ( \E[\hte_i] - \hal_i)^T X^T X 
( \E[\hte_j] - \hal_{j}) = 0.    
\end{equation}
Plugging (\ref{eq:claim44_a}) 
back to (\ref{eq:claim44_b}) proves 
the claim. 
\end{proof}

\begin{claim*}[\ref{claim2}]
Given $E$, we have 
\begin{enumerate}[label=(\alph*)]
\item $\E_{S \mid E} || X_i (\hte_i - \hal_i)||^2 
= \frac{d}{m-d-1} \mer_i(\hal_i)$. \label{claim2:lemma1}
\item $\E_{S \mid SX, E} 
(\hte_i - \hal_i) (\hte_i - \hal_i)^T 
= (X_i^T S_i^T S_i X_i)^{-1} 
\frac{1}{m} \mer_i(\hal_i)$. 
\label{claim2:htecovariance}
\item $\E_{S} (X_i^T S_i^T S_i X_i)^{-1} 
= (X_i^T X_i)^{-1} \frac{m}{m-d-1}$. 
\label{claim2:expinv}
\end{enumerate}
\end{claim*}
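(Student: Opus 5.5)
The plan is to prove the three parts in the order (c), (b), (a). Part (a) will then follow from (b) and (c), mirroring the proof of Claim \ref{claim3} with $j=i$.

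\textbf{Part (c).} Write $A_i = S_i X_i \in \R^{m\times d}$. Since $S_i$ has i.i.d. $N(0,1/m)$ entries and $X_i$ is fixed, the rows of $A_i$ are i.i.d. $N(\vec{0}, \frac{1}{m} X_i^T X_i)$. Hence $A_i^T A_i = X_i^T S_i^T S_i X_i$ is Wishart, $W_d(m, \frac1m X_i^TX_i)$, and it is invertible with probability one because $m>d$. The standard inverse-Wishart mean formula, valid for $m > d+1$, gives
\[
\E_S (X_i^T S_i^T S_i X_i)^{-1} = \frac{(\frac1m X_i^TX_i)^{-1}}{m-d-1} = \frac{m}{m-d-1}(X_i^TX_i)^{-1}.
\]

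\textbf{Part (b).} Start from the decomposition (\ref{eq:claim_expectation}) in the proof of Claim \ref{claim1}:
\[
\hte_i - \hal_i = M_i\, S_i\hE_i, \qquad M_i = (X_i^T S_i^T S_i X_i)^{-1} X_i^T S_i^T .
\]
By Claim \ref{claim1}\ref{claim1:indpEX}, $S_i\hE_i$ is independent of $S_iX_i$, and hence of $M_i$. Given $E$, the vector $S_i\hE_i$ is Gaussian with mean zero and covariance $\frac1m \|\hE_i\|^2 I_m = \frac1m \mer_i(\hal_i) I_m$. Conditioning on $(SX,E)$ therefore gives
\[
\E\,(\hte_i-\hal_i)(\hte_i-\hal_i)^T = M_i \E[S_i\hE_i \hE_i^TS_i^T] M_i^T = \frac{\mer_i(\hal_i)}{m} M_iM_i^T .
\]
Finally, $M_iM_i^T = (X_i^TS_i^TS_iX_i)^{-1}$, which is the stated formula.

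\textbf{Part (a).} Write the loss as a trace:
\[
\|X_i(\hte_i-\hal_i)\|^2 = \mtr[X_i^TX_i (\hte_i-\hal_i)(\hte_i-\hal_i)^T].
\]
Take the conditional expectation given $(SX,E)$ using (b), then the expectation over $SX$ given $E$ using (c). This gives
\[
\mtr\!\left[X_i^TX_i (X_i^TX_i)^{-1}\right]\frac{\mer_i(\hal_i)}{m-d-1} = \frac{d}{m-d-1}\mer_i(\hal_i).
\]

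\textbf{Main obstacle.} The main obstacle is the conditional independence in (b). Its justification is that $S_iX_i$ and $S_i\hE_i$ are jointly Gaussian, and their rows are uncorrelated because $X_i^T\hE_i=0$ (Claim \ref{claim1}\ref{claim1:orthEX}). Across different rows of $S_i$ they are independent anyway, so the rows of $[S_iX_i, S_i\hE_i]$ are i.i.d. with block-diagonal covariance. I would spell this out explicitly. The other step needing care is citing the inverse-Wishart mean, which requires $m>d+1$, matching the hypothesis.
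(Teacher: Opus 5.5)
Your proposal is correct and follows essentially the same route as the paper, which proves the claim by adapting Lemma 1 of Bartan and Pilanci. There, (b) comes from the decomposition $\hte_i-\hal_i=(X_i^TS_i^TS_iX_i)^{-1}X_i^TS_i^TS_i\hE_i$ together with the independence of $S_i\hE_i$ and $S_iX_i$, (c) is the inverse-Wishart mean, and (a) follows by combining them through the trace; your write-up simply makes explicit the ``straight calculations'' the paper leaves to the citation, namely the covariance $\frac{1}{m}\mer_i(\hal_i)I_m$ of $S_i\hE_i$ and the identity $M_iM_i^T=(X_i^TS_i^TS_iX_i)^{-1}$.
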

\begin{proof}
All results are directly adapted 
from \citep[Lemma 1]{bartan2023distributed}.  
This lemma says given any $(X,Y)$  
and Gaussian sketching $Q \in \R^{m \times n}$ 
satisfying $\E Q^TQ = I_n$, we have 
$\E_{Q} ||X (\hga_Q - \hga)||^2 
= \frac{d}{m-d-1} L(\hga)$, 
where $\hga$ is a minimizer 
of $L(\al) = ||X \al - Y||^2$ and 
$\hga_Q$ is a minimizer of 
$||Q X \al - Q Y||^2$. 
The original proof invokes two results. 
First, $\E_{Q \mid QX} 
(\hga_Q - \hga)(\hga_Q - \hga)^T 
= (X^TQ^TQX)^{-1} \frac{1}{m} 
 L(\hga)$ based on straight 
 calculations and the fact that 
 $\E_{Q \mid Q X} [\hga_Q] = \hga$. 
 Second, 
 $\E_{Q} (X^T Q^T Q X)^{-1} 
 = (X^T X)^{-1} \frac{m}{m-d-1}$ 
 since $Q X$ is a 
 Wishart matrix whose expected 
 inverse is well-known. 

We simply adapt the above results 
from $(Q X, QY)$ to $(Q X_i, Q Y_i)$. 
This is done by replacing the above 
quantities $(X,Y,Q,\hga,\hga_Q, L)$ 
with $(X_i,Y_i,S_i,\hal_i,\hte_i, L_i)$, 
plus a new property that 
$\E [\hte_i] = \hal_i$ in Claim \ref{claim1}\ref{claim1:unbhte}. 
\end{proof}

\begin{claim*}[\ref{claim:error_p3}]
$\E_{E} ||X \bal - X \hal_*||^2 
= (D - d) \sigma^2$.
\end{claim*}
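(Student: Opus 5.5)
Write both estimators as linear maps of the noise. Let $A_i = (X_i^TX_i)^{-1}$ and let $P_i \in \R^{p\times n}$ be the row selector with $X_i = P_i X$, $E_i = P_i E$. Then
\[
\hal_i = \al_* + A_i X_i^T E_i,
\qquad
\hal_* = \al_* + (X^TX)^{-1}X^TE .
\]
Write $H = X^TX = \sum_{j} X_j^TX_j$. Then $\bal - \hal_* = M E$ with
\[
M = \frac{1}{k}\sum_{i=1}^k A_i X_i^T P_i - H^{-1}X^T .
\]
The $\al_*$ terms cancel, since each estimator is unbiased. Because $\E_E[EE^T] = \sigma^2 I_n$,
\[
\E_E\|X(\bal-\hal_*)\|^2 = \sigma^2\,\mtr[M^T X^TX M] = \sigma^2\,\mtr[H M M^T].
\]
So the whole task is computing $MM^T$.

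Expand $MM^T$ into three pieces and use $P_iP_j^T = \delta_{ij} I_p$ and $X^TP_i^T = X_i^T$.

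The first piece is
\[
\frac{1}{k^2}\sum_{i,j} A_i X_i^T P_i P_j^T X_j A_j = \frac{1}{k^2}\sum_i A_i X_i^TX_i A_i = \frac{1}{k^2}\sum_i A_i .
\]
The cross piece is
\[
\frac{1}{k}\sum_i A_i X_i^T P_i X H^{-1} = \frac{1}{k}\sum_i A_i X_i^TX_i H^{-1} = H^{-1},
\]
and its transpose also gives $H^{-1}$. The last piece is $H^{-1}X^TXH^{-1} = H^{-1}$. Altogether,
\[
MM^T = \frac{1}{k^2}\sum_i A_i - H^{-1}.
\]

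Multiplying by $H$ and taking traces gives
\[
\mtr[HMM^T] = \frac{1}{k^2}\sum_i \mtr\Big[\sum_j X_j^TX_j\,(X_i^TX_i)^{-1}\Big] - \mtr[I_d] = \frac{1}{k^2}\sum_{i,j} D_{ji} - d = D - d .
\]
This proves the claim. It also recovers Claim \ref{claim:Dd}, since $d+\Delta_i = \sum_j D_{ji}$.

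The only delicate step is the bookkeeping of the selector matrices: the cross terms vanish for $i\neq j$ because the partitions are disjoint, and $X^TP_i^T = X_i^T$. Note that no Gaussianity of $E$ is needed, only $\E[EE^T]=\sigma^2 I_n$.
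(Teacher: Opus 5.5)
Your proof is correct and follows the paper's argument: the paper also writes $\bal-\hal_*=\sum_i H_i-(X^TX)^{-1}X^TE$ and expands $\E_E\|X(\bal-\hal_*)\|^2$ into the same three terms, which evaluate to $D\sigma^2$, $d\sigma^2$ and $2d\sigma^2$ (the cross term). Your selector-matrix bookkeeping packages those three expectations into the single deterministic identity $MM^T=\frac{1}{k^2}\sum_i (X_i^TX_i)^{-1}-(X^TX)^{-1}$, which also shows that only $\E[EE^T]=\sigma^2 I_n$ is needed.
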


\begin{proof}
Since $Y = X \al_* + E$ 
and $\hal_*$ is a minimizer 
of $||X \al - Y||^2$, 
\begin{equation}
\hal_* 
= (X^T X)^{-1} X^T Y
= (X^T X)^{-1} X^T (X \al_* + E)
= \al_* + (X^T X)^{-1} X^T E.     
\end{equation}
Similarly, since $Y_i = X_i \al_* + E_i$ 
and $\hal_i$ is a minimizer of 
$||X_i \al - Y_i||^2$, 
\begin{equation}
\hal_i 
= (X_i^T X_i)^{-1} X_i^T Y_i
= (X_i^T X_i)^{-1} X_i^T (X_i \al_* + E_i)
= \al_* + (X_i^T X_i)^{-1} X_i^T E_i.    
\end{equation}
Let $H_i = \frac{1}{k} (X_i^T X_i)^{-1} 
X_i^T E_i$. The above implies 
\begin{equation}
\bal = \frac{1}{k} \sum_{i=1}^k \hal_i 
= \frac{1}{k} \sum_{i=1}^k \al_* 
+ \frac{1}{k} \sum_{i=1}^k 
(X_i^T X_i)^{-1} X_i^T E_i 
= \al_* + \sum_{i=1}^k H_i.      
\end{equation}
This further implies 
$\bal - \hal_* = \sum_{i=1}^k H_i 
- (X^T X)^{-1} X^T E$ 
and hence 
\begin{equation}
\label{eq:sup_prf04}
\begin{split}
\E_{E} ||X (\bal - \hal_*)||^2
& = \E ||X \sum_{i=1}^k H_i 
- X (X^T X)^{-1} X^T E||^2\\ 
& = \E ||X \sum_{i=1}^k H_i||^2 
+ \E ||X (X^T X)^{-1} X^T E||^2 
- 2 \E (X \sum_{i=1}^k H_i)^T 
X (X^T X)^{-1} X^T E. 
\end{split}
\end{equation}
On the right side, we will evaluate 
the three expectations separately. 

\vspace{5pt} \noindent 
\textbf{1st Term}: To evaluate 
$\E_{E} ||X \sum_{i=1}^k H_i||^2$, 
observe that 
\begin{equation}
\label{eq:prof_claimlast_01}
\begin{split}
\E_{E} ||X \sum_{i=1}^k H_i||^2 
= \E \mtr [X \sum_{i,j=1}^k H_i H_j^T X^T] 
& = \E \mtr [X \sum_{i=1}^k H_i H_i^T X^T  
+ X \sum_{i=1}^k \sum_{j \neq i} 
H_i H_j^T X^T]\\
& = \E \mtr [X \sum_{i=1}^k H_i H_i^T X^T] 
+ \E \mtr [ X \sum_{i=1}^k \sum_{j \neq i} 
H_i H_j^T X^T]\\
& =  \mtr [X \sum_{i=1}^k \E(H_i H_i^T) X^T] 
+ \mtr [ X \sum_{i=1}^k \sum_{j \neq i} 
\E (H_i H_j^T) X^T]\\
& = \mtr [X \sum_{i=1}^k \E(H_i H_i^T) X^T],  
\end{split}
\end{equation}
where the last equality holds because 
$E$ has i.i.d. zero-mean entries 
so that $\E_{E} E_i E_j^T = 0 I_{p}$, 
which implies 
\begin{equation}
\E_{E} H_i H_j^T = \E 
\frac{1}{k^2} (X_i^T X_i)^{-1} 
X_i^T E_i E_j^T X_j (X_j^T X_j)^{-1}  
= \frac{1}{k^2} (X_i^T X_i)^{-1} 
X_i^T \E (E_i E_j^T) X_j (X_j^T X_j)^{-1}  
= 0 I_d.    
\end{equation}
It remains to evaluate the right 
side of (\ref{eq:prof_claimlast_01}). 
Since $\E_{E} E_i E_i^T = \sigma^2 I_p$, 
\begin{equation}
\label{eq:prof_claimlast_02}
\begin{split}
\E_{E} H_i H_i^T 
= \E \frac{1}{k^2} 
(X_i^T X_i)^{-1} X_i^T E_i 
E_i^T X_i (X_i^T X_i)^{-1} 
& = \frac{1}{k^2} 
(X_i^T X_i)^{-1} X_i^T \E (E_i 
E_i^T) X_i (X_i^T X_i)^{-1} \\
& = \frac{\sigma^2}{k^2} 
(X_i^T X_i)^{-1}.    
\end{split}
\end{equation}
Plugging (\ref{eq:prof_claimlast_02}) 
back to (\ref{eq:prof_claimlast_01}),
plus the fact that $X^TX = \sum_{j=1}^k 
(X_j^T X_j)$, we have 
\begin{equation}
\label{eq:sup_prf07}
\begin{split}
\E_{E} ||X \sum_{i=1}^k H_i||^2
= \mtr \left[X \sum_{i=1}^k 
\E(H_i H_i^T) X^T \right]
& = \mtr \left[ \sum_{i=1}^k \E (H_i H_i^T) 
X^T X \right]\\
& = \mtr \left[\frac{\sigma^2}{k^2} 
\sum_{i=1}^k (X_i^T X_i)^{-1} 
\sum_{j=1}^k (X_j^T X_j) \right]\\ 
& = \mtr [\sigma^2 D] = D \sigma^2.
\end{split}
\end{equation}
\textbf{2nd Term}: To evaluate 
$\E ||X (X^T X)^{-1} X^T E||^2$, 
recall $\E_{E} E E^T = \sigma^2 I_n$. 
Then 
\begin{equation}
\label{eq:sup_prf08}
\begin{split}
\E_{E} ||X (X^T X)^{-1} X^T E||^2 
& = \E \mtr [ X (X^T X)^{-1} X^T 
 E E^T X (X^T X)^{-1} X^T ] \\[.1em]
& = \mtr [ X (X^T X)^{-1} X^T 
 \E (E E^T) X (X^T X)^{-1} X^T ] \\[.1em]
 & = \sigma^2 \mtr [ X (X^T X)^{-1} X^T 
 X (X^T X)^{-1} X^T ] \\[.1em]
 & = \sigma^2 \mtr [ I_d] = d \sigma^2.
\end{split}
\end{equation}
\textbf{3rd Term}: To evaluate 
$\E_{E} (X \sum_{i=1}^k H_i)^T 
X (X^T X)^{-1} X^T E$, 
recall $X^T E = \sum_{j=1}^k X_j^T E_j$. 
Then 
\begin{equation}
\label{eq:sup_prf08b}
\begin{split}
\E_{E} (X \sum_{i=1}^k H_i)^T 
X (X^T X)^{-1} X^T E 
= \E \sum_{i=1}^k H_i^T X^T E 
& = \E \sum_{i=1}^k \sum_{j=1}^k 
H_i^T X_j^T E_j \\
& = \E \sum_{i=1}^k H_i^T X_i^T E_i 
+ \E \sum_{i=1}^k\sum_{j\neq i} 
H_i^T X_j^T E_j \\
& = \E \sum_{i=1}^k H_i^T X_i^T E_i.      
\end{split}
\end{equation}
Above, the last equality holds 
because for $j \neq i$, 
$E_j$ and $H_i$ (depending on $E_i$) 
are independent so that 
\begin{equation}
\E_{E} \sum_{i=1}^k\sum_{j\neq i} 
H_i^T X_j^T E_j = 
\sum_{i=1}^k\sum_{j\neq i} 
\E [H_i]^T X_j^T \E[E_j] = \vec{0}.
\end{equation}
It remains to evaluate the right side 
of (\ref{eq:sup_prf08b}). 
By plugging in $H_i = \frac{1}{k} (X_i^T X_i)^{-1} 
X_i^T E_i$, we have  
\begin{equation}
\begin{split}
\E_{E} \sum_{i=1}^k H_i^T X_i^T E_i
& = \E \frac{1}{k} \sum_{i=1}^k E_i^T 
X_i (X_i^T X_i)^{-1} X_i^T E_i \\ 
& = \E \frac{1}{k} \sum_{i=1}^k \mtr [ X_i (X_i^T 
X_i)^{-1} X_i^T E_i E_i^T]\\ 
& = \frac{1}{k} \sum_{i=1}^k \mtr [ X_i (X_i^T 
X_i)^{-1} X_i^T \E (E_i E_i^T)]\\ 
& = \frac{\sigma^2}{k} 
\sum_{i=1}^k \mtr [ X_i (X_i^T 
X_i)^{-1} X_i^T]\\ 
& = \frac{\sigma^2}{k} 
\sum_{i=1}^k \mtr [ I_d] = d \sigma^2. 
\end{split}
\end{equation}
Plugging this back to (\ref{eq:sup_prf08b}), 
the 3rd term is 
\begin{equation}
\E_{E} (X \sum_{i=1}^k H_i)^T 
X (X^T X)^{-1} X^T E =  d \sigma^2.    
\end{equation}
\noindent \textbf{Wrap Up}. 
Finally, plugging the evaluations 
of all terms back to (\ref{eq:sup_prf04}), 
we have 
\begin{equation}
\begin{split}
\E_{E} ||X (\bal - \hal_*)||^2
& = \E ||X \sum_{i=1}^k H_i||^2 
+ \E ||X (X^T X)^{-1} X^T E||^2 
- 2 \E (X \sum_{i=1}^k H_i)^T 
X (X^T X)^{-1} X^T E\\
& = D \sigma^2 + d \sigma^2 - 2 
d \sigma^2\\
& = (D - d) \sigma^2.
\end{split}
\end{equation}
This proves the claim. 
\end{proof}

\section{Appendix: Proofs of other Claims}
\label{sec:appB}

\vspace{5pt}
\begin{lemma*}[\ref{lem:Dlower}]
With probability one, $D \geq d$ 
and the equality is achieved when 
$X_i^TX_i = X_j^T X_j$ for all $i \neq j$. 
\end{lemma*}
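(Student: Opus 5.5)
Write $A_i = X_i^TX_i$ for $i=1,\ldots,k$. With probability one each $A_i$ is symmetric positive definite, by the standing full-rank assumption. Since $D_{ij}=\mtr(A_iA_j^{-1})$, the sum $\sum_{i,j} D_{ij}$ collects each unordered pair once from each order. Pair the $(i,j)$ and $(j,i)$ terms to write
\begin{equation*}
k^2 D = \sum_{i=1}^k \mtr(I_d) + \sum_{i<j} \left[ \mtr(A_iA_j^{-1}) + \mtr(A_jA_i^{-1}) \right]
= kd + \sum_{i<j} \left[ \mtr(A_iA_j^{-1}) + \mtr(A_jA_i^{-1}) \right].
\end{equation*}
It then suffices to show $\mtr(A_iA_j^{-1}) + \mtr(A_jA_i^{-1}) \geq 2d$ for every pair. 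There are $k(k-1)/2$ pairs, so this gives $k^2D \geq kd + k(k-1)d = k^2 d$, i.e.\ $D\ge d$.

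For the pairwise inequality, I will symmetrize. Set $M = A_j^{-1/2} A_i A_j^{-1/2}$, which is symmetric positive definite. By cyclicity of the trace, $\mtr(A_iA_j^{-1}) = \mtr(M)$ and $\mtr(A_jA_i^{-1}) = \mtr(M^{-1})$. If $\lambda_1,\ldots,\lambda_d>0$ are the eigenvalues of $M$, the pair sum equals $\sum_{r=1}^d (\lambda_r + \lambda_r^{-1})$. Each summand is at least $2$ by AM--GM, since $\lambda + 1/\lambda - 2 = (\sqrt{\lambda}-1/\sqrt{\lambda})^2 \ge 0$. This gives the bound $2d$.

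For equality, note that if $A_i=A_j$ for all $i\ne j$, then every $D_{ij}=\mtr(I_d)=d$, so $D=d$ directly. Conversely, equality forces every $\lambda_r=1$ for every pair, hence $M=I_d$ and $A_i=A_j$. So the condition is also necessary, which supports calling the bound tight.

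The only step needing care is the positive-definiteness used to define $A_j^{-1/2}$ and to guarantee positive eigenvalues. This is exactly the ``with probability one'' invertibility assumption stated in the preliminaries, so I do not expect a real obstacle; the rest is routine.
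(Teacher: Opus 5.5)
Your proof is correct. Its skeleton matches the paper's: each diagonal term contributes $d$, the $(i,j)$ and $(j,i)$ terms are paired, and $\lambda+1/\lambda\ge 2$ closes the argument.

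The real difference is how you pass from a trace to a spectrum, and here your route is the rigorous one. The paper sets $G_{ij}=(X_i^TX_i)(X_j^TX_j)^{-1}$ and writes $\mathrm{tr}[G_{ij}]=\sum_t\lambda_t(G_{ij})$ with $\lambda_t$ the \emph{singular values}. But $G_{ij}$ is generally not symmetric, or even normal, and then one only has $\mathrm{tr}[G_{ij}]\le\sum_t\sigma_t(G_{ij})$, which points the wrong way for a lower bound. For example, $A_i=\begin{pmatrix}2&1\\1&1\end{pmatrix}$ and $A_j=\mathrm{diag}(1,2)$ give $\mathrm{tr}[G_{ij}]=2.5$ but $\sum_t\sigma_t(G_{ij})=\sqrt{6.5}$.

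Your symmetrization $M=A_j^{-1/2}A_iA_j^{-1/2}$ fixes this. Equivalently, the similarity $A_j^{-1/2}G_{ij}A_j^{1/2}=M$ shows three things:
\begin{itemize}
\item the \emph{eigenvalues} of $G_{ij}$ are positive reals;
\item the trace equals their sum;
\item $G_{ji}=G_{ij}^{-1}$ has the reciprocal eigenvalues.
\end{itemize}
These are exactly what the AM--GM step needs.

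You also prove the converse of the equality case: $\lambda_r=1$ for all $r$ forces $M=I_d$, hence $A_i=A_j$. The paper only shows sufficiency. So your argument establishes that $D=d$ if and only if all subset Gram matrices $X_i^TX_i$ coincide, which is what justifies calling the bound tight.
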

\begin{proof}
By presumption all $X_i^TX_i$'s 
are invertible with probability one. 
Consider two cases. 
When $j = i$, it is clear 
that $D_{ij} = \mtr[I_d] = d$. 
When $j \neq i$, we will analyze 
$D_{ij} + D_{ji}$ for each pair 
of $(i,j)$. Let 
\begin{equation}
G_{ij} = (X_i^TX_i)(X_j^TX_j)^{-1}    
\end{equation}
and $\lambda_t(\cdot)$ be the $t_{th}$ 
singular value of a matrix. Then 
\begin{equation}
\label{eq:appB_Dlowerbound_01}
D_{ij} + D_{ji} 
= \mtr[G_{ij}] + 
\mtr[G_{ji}] = 
\sum_{t=1}^d \lambda_t(G_{ij}) 
+ \sum_{t=1}^d \lambda_t(G_{ji}) 
= \sum_{t=1}^d \lambda_t(G_{ij}) 
+ \sum_{t=1}^d \frac{1}{\lambda_t(G_{ij})}. 
\end{equation}
where the last equality follows 
a simple fact that 
$G_{ij} = G_{ji}^{-1}$. Further, 
since all singular values are positive, 
\begin{equation}
\label{eq:appB_Dlowerbound_02}
\lambda_t(G_{ij}) 
+ \frac{1}{\lambda_t(G_{ij})} 
\geq 2 \sqrt{\lambda_t(G_{ij}) 
\frac{1}{\lambda_t(G_{ij})}} = 2. 
\end{equation}
This lower bound holds for every $t$.
Plugging all lower bounds back to 
(\ref{eq:appB_Dlowerbound_01}), we have 
\begin{equation}
D_{ij} + D_{ji} 
= \sum_{t=1}^d \lambda_t(G_{ij}) 
+ \frac{1}{\lambda_t(G_{ij})} 
\geq \sum_{t=1}^d 2 = 2d. 
\end{equation}
Combining this with the case $i = j$, 
we have 
\begin{equation}
\label{eq:appB_Dlowerbound_03}
\begin{split}
D = \frac{1}{k^2} \sum_{i,j=1}^k D_{ij} 
= \frac{1}{k^2} \sum_{i=1}^k D_{ii} 
+ \frac{1}{k^2} \sum_{i=1}^k \sum_{j \neq i} D_{ij} 
\geq \frac{1}{k^2} k d + 
\frac{1}{k^2} \frac{k(k-1)}{2} 2d
= \frac{d}{k} + \frac{d}{k} (k-1)
= d.    
\end{split}
\end{equation}
To identify conditions for the 
equality, note the equality in (\ref{eq:appB_Dlowerbound_02}) holds 
if $\lambda_t(G_{ij}) = \frac{1}{\lambda_t(G_{ij})}$ or, equivalently, 
$\lambda_t(G_{ij}) = 1$. Then, it 
is not hard to see the equality in (\ref{eq:appB_Dlowerbound_03}) 
holds when $\lambda_t(G_{ij}) = 1$ 
for all $t, i, j$ -- a condition that 
is satisfied when $X_i^TX_i = X_j^T X_j$ 
for all $i \neq j$.
This proves the lemma. 
\end{proof}

\newpage 

\begin{claim}
\label{claim:ED}
If $x_i$'s are sampled i.i.d. 
from $N(\vec{0}, \V)$ with an 
invertible $\V \in \R^{d \times d}$, 
then $\E_X[D] = \frac{n-(d+1) 
}{n-k(d+1)} d $. 
\end{claim}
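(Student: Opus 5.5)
The plan is to compute $\E_X[D]$ term by term, using linearity of expectation over the double sum $D = \frac{1}{k^2}\sum_{i,j} D_{ij}$. The key structural fact is that, since the rows $x_1,\ldots,x_n$ are i.i.d.\ $N(\vec{0},\V)$ and the partition is a fixed row partition, the Gram matrices $W_i := X_i^T X_i$, $i=1,\ldots,k$, are mutually independent. Each $W_i$ is Wishart $W_d(\V, p)$, since it is a sum of $p$ independent outer products $x_r x_r^T$. We are in the regime $p > d+1$ assumed throughout, so each $W_i$ is invertible with probability one and its inverse has finite mean.

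\textbf{Diagonal terms.} For $i = j$ we have $D_{ii} = \mtr[I_d] = d$ deterministically. These $k$ terms therefore contribute $kd$ to the double sum.

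\textbf{Off-diagonal terms.} For $i \neq j$, independence of $W_i$ and $W_j$ lets the expectation factor inside the trace:
\[
\E[D_{ij}] = \mtr\big[\E(W_i)\,\E(W_j^{-1})\big].
\]
We then invoke two standard Wishart moments. The first is $\E W_i = p\V$. The second is $\E W_j^{-1} = \V^{-1}/(p-d-1)$, valid for $p > d+1$. This is the same inverse-Wishart fact behind Claim \ref{claim2}\ref{claim2:expinv}, here applied with scale $\V$ instead of $X_i^TX_i/m$. Substituting gives
\[
\E[D_{ij}] = \frac{p}{p-d-1}\,\mtr[\V\V^{-1}] = \frac{pd}{p-d-1},
\]
so the $k(k-1)$ off-diagonal pairs contribute $k(k-1)\,pd/(p-d-1)$.

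\textbf{Assembly.} Combining the two contributions,
\[
\E_X[D] = \frac{d}{k}\left[1 + \frac{(k-1)p}{p-d-1}\right] = \frac{d}{k}\cdot\frac{kp-d-1}{p-d-1}.
\]
Substituting $p = n/k$ turns the numerator into $n-(d+1)$ and $k(p-d-1)$ into $n-k(d+1)$. This yields exactly $\E_X[D] = \frac{n-(d+1)}{n-k(d+1)}\,d$.

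\textbf{Main obstacle.} The only nontrivial ingredient is justifying the inverse-Wishart mean $\E W_j^{-1} = \V^{-1}/(p-d-1)$ in the general-covariance case. I would reduce it to the identity case: write $x_r = \V^{1/2} z_r$ with $z_r \sim N(\vec{0}, I_d)$, so that $W_j = \V^{1/2} Z_j^T Z_j \V^{1/2}$ and $W_j^{-1} = \V^{-1/2} (Z_j^TZ_j)^{-1}\V^{-1/2}$. Then the classical result $\E (Z_j^TZ_j)^{-1} = I_d/(p-d-1)$ finishes the argument, and it also supplies integrability under $p > d+1$. The independence of the $W_i$ across blocks is immediate from the i.i.d.\ sampling and the disjointness of the blocks.
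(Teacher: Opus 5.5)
Your proposal is correct and follows the paper's proof essentially step for step: it isolates the deterministic diagonal terms $D_{ii}=d$, factors $\E[D_{ij}]=\mtr[\E(X_i^TX_i)\,\E(X_j^TX_j)^{-1}]$ for $i\neq j$ by independence of the blocks, inserts the Wishart moments $p\V$ and $\V^{-1}/(p-d-1)$, and substitutes $p=n/k$. The one addition is your whitening reduction $x_r=\V^{1/2}z_r$, which justifies the general-covariance inverse-Wishart mean and its integrability for $p>d+1$; the paper simply cites these moments from the Matrix Cookbook.
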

\begin{proof} 
By assumption $X_j^T X_j$ 
is a Wishart matrix with $p$ degrees 
of freedom. It is well-known that 
$\E_X[X_j^T X_j] = p \V$ and 
$\E_X[(X_j^T X_j)^{-1}] 
= \frac{\V^{-1}}{p-d-1}$ 
\citep[Theorems 7.8.1 and 7.9.1]{petersen2008matrix}. Then 
\begin{equation}
\begin{split}
\E_X[D]  & = \E \frac{1}{k^2} 
\sum_{i,j=1}^k 
\mtr (X_i^TX_i)(X_j^TX_j)^{-1} \\ 
& = \E \frac{1}{k^2} 
\sum_{i=1}^k \mtr (X_i^TX_i)(X_i^TX_i)^{-1} 
+ \E \frac{1}{k^2} 
\sum_{i=1}^k \sum_{j \neq i} 
\mtr (X_i^TX_i)(X_j^TX_j)^{-1}\\
& = \E \frac{1}{k^2} 
\sum_{i=1}^k \mtr[I_d] + \frac{1}{k^2} 
\sum_{i=1}^k \sum_{j \neq i} 
\mtr \left[ \E_{X_i}(X_i^TX_i)\ 
\E_{X_j}(X_j^TX_j)^{-1} \right]\\
& = \frac{d}{k} + \frac{1}{k^2} 
\sum_{i=1}^k \sum_{j \neq i} 
\frac{p}{p-d-1} \mtr[I_d]\\
& = \frac{d}{k} + 
\frac{k-1}{k} \frac{pd}{p-d-1} \\
& = \frac{d}{k} 
\left( 1 + \frac{p(k-1)}{p-d-1} \right)\\
& = \frac{n-d-1}{n-kd-k} d. 
\end{split}
\end{equation} 
Above, the third equality holds 
because $X_i$ and $X_j$ are independent 
whenever $i \neq j$, and 
the last follows $pk = n$. 
\end{proof}

\begin{claim}
\label{claim:Dd}
$\sum_{i=1}^k (d+\Delta_i) = k^2 D$.     
\end{claim}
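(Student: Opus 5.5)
The identity is purely algebraic: both sides are linear combinations of the traces $D_{ij}=\mtr[(X_i^TX_i)(X_j^TX_j)^{-1}]$, and no randomness enters. The plan is to rewrite each summand $d+\Delta_i$ in terms of these traces, sum over $i$, and compare with the definition of $D$.

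\textbf{Step 1 (the $d$ term).} For each $i$ we have $d=\mtr[I_d]=\mtr[(X_i^TX_i)(X_i^TX_i)^{-1}]=D_{ii}$, using invertibility of $X_i^TX_i$ (which holds with probability one by the standing assumption).

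\textbf{Step 2 (the $\Delta_i$ term).} By linearity of the trace, $\Delta_i=\mtr[\sum_{j\neq i}X_j^TX_j(X_i^TX_i)^{-1}]=\sum_{j\neq i}\mtr[(X_j^TX_j)(X_i^TX_i)^{-1}]=\sum_{j\neq i}D_{ji}$. Hence
\begin{equation*}
d+\Delta_i=D_{ii}+\sum_{j\neq i}D_{ji}=\sum_{j=1}^k D_{ji}.
\end{equation*}

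\textbf{Step 3 (sum over $i$).} Summing over $i=1,\ldots,k$ gives $\sum_{i=1}^k(d+\Delta_i)=\sum_{i=1}^k\sum_{j=1}^k D_{ji}$. Since the double sum runs over all ordered pairs, swapping the dummy labels $i\leftrightarrow j$ shows this equals $\sum_{i,j}D_{ij}=k^2D$, by the definition of $D$.

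\textbf{Main point of care.} The one thing to watch is the index order in Step 2. $\Delta_i$ produces $D_{ji}$ (the matrix $X_i^TX_i$ is the one inverted), not $D_{ij}$, and in general $D_{ij}\neq D_{ji}$. The argument does not need symmetry of $D_{ij}$, because the final sum is over all ordered pairs and relabeling suffices. Otherwise every step is a one-line computation.
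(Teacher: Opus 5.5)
Your proof is correct and follows the paper's argument essentially verbatim: you absorb the $d$ term by writing $d=\mtr[(X_i^TX_i)(X_i^TX_i)^{-1}]$, so that $d+\Delta_i=\mtr[\sum_{j=1}^k X_j^TX_j(X_i^TX_i)^{-1}]$, and then you sum over $i$ to obtain $k^2D$. Your explicit remark that $\Delta_i$ yields $D_{ji}$ rather than $D_{ij}$, with the final relabeling over ordered pairs, spells out a step the paper leaves implicit.
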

\begin{proof}
The key observation is that 
\begin{equation}
d = \mtr[I_d] = 
\mtr[X_i^T X_i (X_i^T X_i)^{-1}].     
\end{equation}
Thus, 
\begin{equation}
d + \Delta_i  
= \mtr[X_i^T X_i (X_i^T X_i)^{-1}] 
+ \mtr [ {\sum}_{j \neq i} 
X_j^T X_j (X_i^T X_i)^{-1} ]   
= \mtr [{\sum}_{j=1}^k X_j^T X_j (X_i^T X_i)^{-1}]. 
\end{equation}
Thus, by the design of $D$, 
\begin{equation}
 \sum_{i=1}^k (d+\Delta_i) 
 = \sum_{i=1}^k \mtr [\sum_{j=1}^k X_j^T X_j (X_i^T X_i)^{-1}] 
 = \mtr  [\sum_{i=1}^k \sum_{j=1}^k X_j^T X_j (X_i^T X_i)^{-1}] = k^2 D.  
\end{equation}
\end{proof}

At last, we prove Remark 
\ref{rem:ours2}. The proof 
invokes the following 
classical singular value 
bounds for random matrix. 

\begin{theorem}[\citep{vershynin2012introduction}, 
Corollary 5.35]
\label{rem:svbound}
Let $Z$ be an $m \times p$ matrix 
whose entries are independent 
standard normal random variables. 
Then for every $t \geq 0$, 
with probability at least $1 - 2 
\exp(-t^2/2)$ one has 
\begin{equation}
\sqrt{m} - \sqrt{p} - t 
\leq \sigma_{\min}(Z) 
\leq \sigma_{\max}(Z) 
\leq \sqrt{m} + \sqrt{p} + t,  
\end{equation}
where $\sigma_{\max}$ and 
$\sigma_{\min}$ denote the 
maximum and minimum singular 
values, respectively. 
\end{theorem}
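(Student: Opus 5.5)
The plan is the standard two-step argument behind this result: first bound the \emph{expected} extreme singular values of $Z$ with Gaussian comparison inequalities, then upgrade to a high-probability statement by Gaussian concentration of Lipschitz functions. When $m < p$ the lower bound $\sqrt{m}-\sqrt{p}-t$ is negative and holds trivially. So for the lower bound I will assume $m \ge p$; in that case $\sigma_{\min}(Z) = \min_{u \in S^{p-1}} \|Zu\|$.

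\textbf{Step 1 (expectations).} Write $\sigma_{\max}(Z) = \max_{u \in S^{p-1}, v \in S^{m-1}} X_{u,v}$ with $X_{u,v} = \langle Zu, v\rangle$. Compare this with the process $Y_{u,v} = \langle g, u\rangle + \langle h, v\rangle$, where $g \sim N(0,I_p)$ and $h \sim N(0,I_m)$ are independent. A direct computation gives
\begin{equation}
\E (X_{u,v}-X_{u',v'})^2 = \|uv^T - u'v'^T\|_F^2 \le \|u-u'\|^2 + \|v-v'\|^2 = \E (Y_{u,v}-Y_{u',v'})^2 .
\end{equation}
The inequality reduces to $(1-\langle u,u'\rangle)(1-\langle v,v'\rangle) \ge 0$. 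By the Sudakov--Fernique inequality,
\begin{equation}
\E\,\sigma_{\max}(Z) \le \E\|g\| + \E\|h\| \le \sqrt{p}+\sqrt{m},
\end{equation}
where the last step uses Jensen's inequality.

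For the minimum, use the min--max form $\sigma_{\min}(Z) = \min_{u}\max_{v} X_{u,v}$. Apply Gordon's min--max comparison inequality to the same pair of processes. This needs equality of the increments when $u=u'$, which does hold. The result is
\begin{equation}
\E\,\sigma_{\min}(Z) \ge \E\|h\| - \E\|g\|.
\end{equation}
It remains to show $\E\|h_m\| - \E\|g_p\| \ge \sqrt{m}-\sqrt{p}$. This follows from the known fact that $n \mapsto \sqrt{n} - \E\|g_n\|$ is nonincreasing, which can be checked from the explicit Gamma-function formula for $\E\|g_n\|$.

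\textbf{Step 2 (concentration).} By Weyl's inequality, both $Z \mapsto \sigma_{\max}(Z)$ and $Z \mapsto \sigma_{\min}(Z)$ are $1$-Lipschitz with respect to the Frobenius norm, i.e.\ the Euclidean norm on $\R^{mp}$. Gaussian concentration for Lipschitz functions then gives
\begin{equation}
P\bigl(\sigma_{\max} \ge \E\sigma_{\max} + t\bigr) \le e^{-t^2/2}, \qquad P\bigl(\sigma_{\min} \le \E\sigma_{\min} - t\bigr) \le e^{-t^2/2}.
\end{equation}
Combining these with Step 1 and taking a union bound over the two events yields the stated bound with probability at least $1-2\exp(-t^2/2)$.

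The main obstacle is the lower bound in Step 1. The upper bound is a routine Slepian-type comparison. The lower bound, by contrast, needs Gordon's min--max inequality, with careful verification of its two increment conditions. It also needs the monotonicity estimate on $\E\|g_n\|$ so that the result comes out as exactly $\sqrt{m}-\sqrt{p}$ rather than a slightly weaker constant. If that estimate proves delicate, I would fall back on citing it from \citep{vershynin2012introduction}.
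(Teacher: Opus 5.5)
Your proposal is correct and follows the standard argument behind the cited result. You compare with $\langle g,u\rangle+\langle h,v\rangle$ via Sudakov--Fernique and Gordon to get $\sqrt{m}-\sqrt{p}\le \E\,\sigma_{\min}(Z)\le \E\,\sigma_{\max}(Z)\le \sqrt{m}+\sqrt{p}$, including the monotonicity of $\sqrt{n}-\E\|g_n\|$, which is true and which you rightly flag as the one delicate step. You then apply Gaussian concentration to the $1$-Lipschitz maps $\sigma_{\max},\sigma_{\min}$ and take a union bound. The paper gives no proof of its own and imports the statement directly from Vershynin's Corollary 5.35, whose proof is this same two-step argument.
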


The following claim will be used 
to prove Remark. 

\begin{remark}
\label{rem:support}
Recall $S_i \in \R^{m \times p}$ is a random 
matrix with i.i.d. entries following 
$N(0, 1/m)$, and $X_i \in \R^{p \times d}$ 
is a fixed sample matrix with $p > d$. 
For any $\delta > 0$, each of the following 
two events holds with probability at least 
$1 - \delta/(2k)$ over the random choices 
of $S_i$: 
\begin{enumerate}
\item if $m \geq (\sqrt{d} + \sqrt{2 
\ln(2k/\delta)})^2$, then 
$(X_i^TS_i^T S_i X_i)^{-1} 
\preceq 
\left(1-\sqrt{\frac{d}{m}} - 
\sqrt{\frac{2 \log(2k/\delta)}{m}} 
\right)^{-2} (X_i^T X_i)^{-1}$;   
\item $||S_i S_i^T||_{op} \leq 
\left(1 + \sqrt{\frac{p}{m}} + 
\sqrt{\frac{2\log(2k/\delta)}{m}} 
\right)^2$, 
\end{enumerate}
where $\preceq$ is Loewner order 
and $||\cdot||_{op}$ is operator 
norm. 
\end{remark}

\begin{proof}
To prove the first claim, let 
\begin{equation}
X_i = U_i R_i    
\end{equation}
be a QR decomposition 
of $X_i$, where $U_i \in \R^{p 
\times d}$ has orthonormal columns 
and $R_i \in \R^{d \times d}$. 

Our main argument is that $S_iU_i$ 
has i.i.d. entries from $N(0, 1/m)$. 
To verify this, 
let $s \in \R^{1 \times p}$ 
denote an arbitrary row of $S_i$. 
First, $s U_i$ is a vector whose 
entries are jointly Gaussian, because 
any linear combination of these entries 
can be expressed as $s U_i c$ for some constant $c \in \R^d$ -- this expression 
is a sum of (scaled) independent Gaussian entries of $s$ and thus itself a Gaussian. 
Second, entries of $s U_i$ are uncorrelated because the mean is 
\begin{equation}
\E [s U_i] = \E [s] U_i = 0,     
\end{equation}
and thus the covariance matrix is 
\begin{equation}
\text{cov}(sU_i) = \E (s U_i - \E [s U_i])^T 
(s U_i - \E [s U_i]) 
= \E (sU_i)^T(sU_i) 
= U_i^T \E [s^T s] U_i 
= \frac{1}{m} U_i^T U_i 
= \frac{1}{m} I_d, 
\end{equation}
where the last equality holds 
because $U_i$ has orthonormal 
columns. In summary, 
the entries of $s U_i$ are jointly 
Gaussian and uncorrelated, 
which implies they are independent. 

For entries of $S_i U_i$ that are 
generated by different rows of $S_i$, 
they are independent because the rows 
are independent. In addition, it can 
be shown they separately follow $N(0,1/m)$ 
by the same argument for $s U_i$. 

Now we know $S_i U_i$ has i.i.d. 
entries from $N(0,1/m)$. The next step 
is to apply Theorem \ref{rem:svbound} 
to bound its singular value. To be specific, 
with probability at least 
$1 - \exp(-t^2/2)$ (no factor 2 for 
only one side of the inequality), 
\begin{equation}
\sigma_{\min}(\sqrt{m} S_i U_i) 
\geq \sqrt{m} - \sqrt{d} - t.
\end{equation}
Dividing both sides by $\sqrt{m}$ and setting 
$\exp(-t^2/2) = \delta / (2k)$, we 
have with probability at least $1 - 
\delta / (2k)$, 
\begin{equation}
\sigma_{\min}(S_i U_i) \geq 
1 - \sqrt{\frac{d}{m}} - 
\sqrt{\frac{2 \ln (2k/\delta)}{m}}.  
\end{equation}
If the right side is positive 
i.e., $1 - \sqrt{\frac{d}{m}} - 
\sqrt{\frac{2 \ln (2k/\delta)}{m}} 
> 0$, then the above implies 
\begin{equation}
(S_i U_i)^T (S_i U_i) 
\succeq 
\left(1 - \sqrt{\frac{d}{m}} 
- \sqrt{\frac{2 \ln (2k/\delta)}{m}} 
\right)^2 I_d, 
\end{equation}
and thus 
\begin{equation}
X_i^TS_i^T S_i X_i
= R_i^T U_i^T S_i^T S_i U_i R_i  
\succeq \left(1 - \sqrt{\frac{d}{m}} 
- \sqrt{\frac{2 \ln (2k/\delta)}{m}} 
\right)^2 R_i^T R_i,  
\end{equation}
and further 
\begin{equation}
(X_i^TS_i^T S_i X_i)^{-1}
\preceq \left(1 - \sqrt{\frac{d}{m}} 
- \sqrt{\frac{2 \ln (2k/\delta)}{m}} 
\right)^{-2} (R_i^T R_i)^{-1}. 
\end{equation}
Finally, plugging in the following 
equivalence proves the first claim: 
\begin{equation}
X_i^T X_i = R_i^T U_i^T U_i R_i = R_i^T R_i,   
\end{equation}
where the last equality holds because 
$U_i$ has orthonormal columns. 
The condition of the first claim 
follows the above presumption 
that $1 - \sqrt{\frac{d}{m}} - 
\sqrt{\frac{2 \ln (2k/\delta)}{m}} 
> 0$. 

The second claim directly follows 
Theorem \ref{rem:svbound}, which 
states that with probability at 
least $ 1 - \exp(-t^2/2)$, 
\begin{equation}
\sigma_{\max}(\sqrt{m} S_i) \leq 
\sqrt{m} + \sqrt{p} + t.
\end{equation}
Dividing both sides by $\sqrt{m}$ 
and setting $\exp(-t^2/2) = \delta 
/ (2k)$, we have with probability 
at least $1 - \delta / (2k)$, 
\begin{equation}
\sigma_{\max}(S_i) \leq 
1 + \sqrt{\frac{p}{m}} + \sqrt{
\frac{2 \ln (2k/\delta)}{m}}. 
\end{equation}
Taking square on both sides 
and applying the fact that 
$||S_i^T S_i||_{op} = 
\sigma_{\max}^2(S_i)$ 
proves the second claim. 
\end{proof}

\begin{remark*}[\ref{rem:ours2}]
For $\delta > 0$ 
and $m \geq (\sqrt{d} + \sqrt{2 
\ln\frac{2k}{\delta}})^2$, with 
probability at least $1 - \delta$ 
over the random choice of $S$, 
\begin{equation}
\E_{E_{\bte}} [L_o(\bte)] 
- \E_{E_{\hal_*}}[L_o(\hal_*)] 
\leq \sigma^2 D \cdot (\Gamma(p,m,k,\delta) 
- 1 ) + \sigma^2 (D - d) \quad := B_{\te}', 
\end{equation}
where $\Gamma(p,m,k,\delta) 
= \frac{(\sqrt{m} + \sqrt{p} 
+ \sqrt{2\log(2k/\delta)} )^2}{
(\sqrt{m}-\sqrt{d} - \sqrt{2 
\log(2k/\delta)} )^{2}}$.
\end{remark*}
\begin{proof}
Recall that $L_o(\hal) = \E_{Y} ||X \hal 
- Y||$ is a measure of out-of-sample loss, where 
$Y$ and $\hal$ are built from independent label 
noises $E_{Y}$ and $E_{\hal}$ respectively. 
The proof is divided into two parts. 
The first part evaluates $L_o(\bte)$ 
and the second part evaluates $L_o(\hal_*)$.

\vspace{7.5pt} \noindent 
\underline{Part I. Evaluation of $L_o(\bte)$}

\vspace{5pt} \noindent 
Let $E_Y$ be the data noise of $Y$, i.e., 
$Y = X \al + E_Y$. Then,
\begin{equation}
\label{eq:objremark_ours_00}
\begin{split}
L_o(\bte) & = \E_{Y} ||X \bte - Y||^2\\[.1em]
& = \E_{E_Y} ||X \bte - X \al 
- E_Y||^2\\[.1em] 
& = ||X \bte - X \al||^2 + \E ||E_Y||^2 
- 2 \E E_Y^T (X \bte - X \al)\\[.1em]
& = ||X \bte - X \al||^2 + 
\sigma^2 n.
\end{split}
\end{equation}
Further, let $E_{\bte} := \{ 
E_{1}, \ldots, E_{k}\}$ 
be the collection of data noises 
learned by $\bte$, where $Y_i = 
X_i \al + E_{i}$. Then, 
\begin{equation}
\label{eq:objremark_ours_01}
\E_{E_{\bte}} ||X \bte - X \al||^2 
= \E || X \frac{1}{k} 
\sum_{i=1}^k \hte_i - X \al||^2 
= \frac{1}{k^2} \E || \sum_{i=1}^k 
X (\hte_i - \al)||^2 
= \frac{1}{k^2} \sum_{i=1}^k 
\E || X (\hte_i - \al)||^2,  
\end{equation}
where the last equality holds because 
the expected sum of cross terms is zero; 
to be specific, since 
\begin{equation}
\label{eq:modeli}
\hte_i = (X_i^T S_i^T S_i X_i)^{-1}    
X_i^T S_i^T S_i Y_i 
= (X_i^T S_i^T S_i X_i)^{-1}    
X_i^T S_i^T S_i (X_i \al + E_i)
= \al + (X_i^T S_i^T S_i X_i)^{-1} 
X_i^T S_i^T S_i E_i,  
\end{equation}
we have 
\begin{equation}
\E_{E_{\bte}}[\hte_i] = 
\E_{E_i}[\hte_i] = \al,  
\end{equation}
and thus 
\begin{equation}
\E_{E_{\bte}} \sum_{i=1}^k \sum_{j \neq i} 
(\hte_i - \al)^T X^T X (\hte_j - \al) 
= \sum_{i=1}^k \sum_{j \neq i} 
\E_{E_i} (\hte_i - \al)^T X^T X 
\E_{E_j} (\hte_j - \al) = 0. 
\end{equation}
Next, formula (\ref{eq:modeli}) 
also implies 
\begin{equation}
\hte_i - \al = (X_i^T S_i^T S_i X_i)^{-1} 
X_i^T S_i^T S_i E_i.      
\end{equation}
Plugging it into the right side 
of (\ref{eq:objremark_ours_01}), 
we have 
\begin{equation}
\label{eq:rem_ours_02}
\begin{split}
\frac{1}{k^2} \sum_{i=1}^k 
\E_{E_{\bte}} || X (\hte_i - \al)||^2 
& = \frac{1}{k^2} \sum_{i=1}^k 
\E\ \text{tr} [ X^T X 
(\hte_i - \al) (\hte_i - \al)^T]\\
& = \frac{1}{k^2} \sum_{i=1}^k 
\text{tr} [ X^T X \E (\hte_i - \al) 
(\hte_i - \al)^T]\\     
& = \frac{1}{k^2} \sum_{i=1}^k 
\text{tr} [ X^T X \E (X_i^T S_i^T S_i X_i)^{-1} X_i^T S_i^T S_i E_i E_i^T 
S_i^T S_i X_i (X_i^T S_i^T S_i X_i)^{-1}]\\    
& = \frac{1}{k^2} \sum_{i=1}^k 
\text{tr} [ X^T X (X_i^T S_i^T S_i X_i)^{-1} X_i^T S_i^T S_i \E(E_i E_i^T) S_i^T S_i 
X_i (X_i^T S_i^T S_i X_i)^{-1}]\\
& = \frac{\sigma^2}{k^2} \sum_{i=1}^k 
\text{tr} [ X^T X (X_i^T S_i^T S_i X_i)^{-1} X_i^T S_i^T S_i S_i^T S_i 
X_i (X_i^T S_i^T S_i X_i)^{-1}]\\
& \leq \frac{\sigma^2}{k^2} \sum_{i=1}^k 
||S_i S_i^T||_{op} \cdot 
\text{tr} [ X^T X (X_i^T S_i^T S_i X_i)^{-1} X_i^T S_i^T S_i 
X_i (X_i^T S_i^T S_i X_i)^{-1}]\\
& = \frac{\sigma^2}{k^2} \sum_{i=1}^k 
||S_i S_i^T||_{op} \cdot 
\text{tr} [ X^T X (X_i^T S_i^T S_i X_i)^{-1}]. 
\end{split}
\end{equation}
Above, the inequality follows the 
fact that $A^T M A \preceq ||M||_{op} 
A^T A$ for any symmetric positive semidefinite 
matrix $M$ and properly sized matrix $A$. 

To proceed, apply Remark 
\ref{rem:support} to bound every 
$||S_i S_i^T||_{op}$ and $(X_i^T S_i^T S_i X_i)^{-1}$ in the above sum. Recall 
the original theorem bounds each term 
with probability at least $1 - \delta / 2k$.
Then, by a union bound, for $m \geq (\sqrt{d} + \sqrt{2 \ln(2k/\delta)})^2$, 
with probability at least $1 - \delta$, 
\begin{equation}
\label{eq:rem_ours_03}
\begin{split}
\frac{\sigma^2}{k^2}\sum_{i=1}^k 
||S_i S_i^T||_{op} \cdot 
\text{tr} [ X^T X (X_i^T S_i^T S_i X_i)^{-1}] 
& \leq \frac{\sigma^2}{k^2} \sum_{i=1}^k 
\frac{\left(1 + \sqrt{p/m} + 
\sqrt{2\log(2k/\delta)/m} 
\right)^2}{\left(1-\sqrt{d/m} 
- \sqrt{2 \log(2k/\delta)/m} 
\right)^{2}} \text{tr}[(X^T X) 
(X_i^T X_i)^{-1}]\\
& = \Gamma(p,m,k,\delta) \cdot 
\frac{\sigma^2}{k^2} \sum_{i=1}^k 
\text{tr} [(X^T X) (X_i^T X_i)^{-1}]\\
& = \Gamma(p,m,k,\delta) \cdot \sigma^2 D. 
\end{split}
\end{equation}
Above, $\Gamma(p,m,k,\delta) = \frac{
\left(1 + \sqrt{p/m} + 
\sqrt{2\log(2k/\delta)/m} 
\right)^2}{\left(1-\sqrt{d/m} 
- \sqrt{2 \log(2k/\delta)/m} 
\right)^{2}}$ and the last equality 
holds because 
\begin{equation}
\begin{split}
\sum_{i=1}^k \text{tr} [(X^T X) 
(X_i^T X_i)^{-1}] = 
\text{tr} [(X^T X) \sum_{i=1}^k 
(X_i^T X_i)^{-1}] 
& = \text{tr} [\sum_{j=1}^k (X_j^T X_j) \sum_{i=1}^k (X_i^T X_i)^{-1}] \\
& = \text{tr} [\sum_{i,j=1}^k (X_j^T X_j) 
(X_i^T X_i)^{-1}] \\ & = k^2 D.     
\end{split}
\end{equation}

Plugging (\ref{eq:rem_ours_03}) 
back to (\ref{eq:rem_ours_02}) then 
(\ref{eq:objremark_ours_01}), we have 
\begin{equation}
\E_{E_{\bte}} ||X \bte - X \al||^2  
\leq \Gamma(p,m,k,\delta) \cdot \sigma^2 D. 
\end{equation}
Combining this with 
(\ref{eq:objremark_ours_00}) yields 
\begin{equation}
\label{eq:rem_ours_p1}
\E_{\bte} [L_o(\bte)] \leq 
\Gamma(p,m,k,\delta) \cdot 
\sigma^2 D + \sigma^2 n.    
\end{equation}

\noindent 
\underline{Part II. Evaluation of $L_o(\hal_*)$}

\vspace{5pt} \noindent 
Evaluation of $L_o(\hal_*)$ is similar 
to that of $L_o(\bte)$. First, 
\begin{equation}
L_o(\hal_*) = \E_{Y} 
||X \hal_* - Y||^2 
= ||X \hal_* - X \al||^2 
+ \sigma^2 n. 
\end{equation}
Then, let $E_{\hal_*}$ be 
the data noise learned by $\hal_*$. 
We have 
\begin{equation}
\label{eq:rem_ours_part2_01}
\E_{E_{\hal_*}}[L_o(\hal_*)] 
= \E_{E_{\hal_*}} ||X \hal_* - X \al||^2 
+ \sigma^2 n. 
\end{equation}
Further, since 
\begin{equation}
\hal_* 
= (X^TX)^{-1} X^T Y 
=  (X^TX)^{-1} X^T (X \al + E_{\hal_*})
= \al + (X^TX)^{-1} X^T E_{\hal_*}
\end{equation}
we have 
\begin{equation}
\begin{split}
\E_{E_{\hal_*}} ||X \hal_* - X \al||^2 
& = \E \text{tr} [X (X^TX)^{-1} X^T E_{\hal_*} 
E_{\hal_*}^T X (X^TX)^{-1} X^T] \\[.1em]
& = \text{tr} [X (X^TX)^{-1} X^T \E (E_{\hal_*} 
E_{\hal_*}^T) X (X^TX)^{-1} X^T] \\[.1em]
& = \sigma^2 \text{tr} [X (X^TX)^{-1} X^T 
X (X^TX)^{-1} X^T] \\[.1em]
& = \sigma^2 \text{tr} [I_d] \\[.1em]
& = \sigma^2 d. 
\end{split}
\end{equation}
Plugging this back to (\ref{eq:rem_ours_part2_01}), 
we have 
\begin{equation}
\label{eq:rem_ours_p2}
\E_{E_{\hal_*}}[L_o(\hal_*)] 
= \sigma^2 d + \sigma^2 n.
\end{equation}

\noindent 
\underline{Part III. Summary}

\vspace{5pt} \noindent 
Combining (\ref{eq:rem_ours_p1}) 
and (\ref{eq:rem_ours_p2}) yields that, 
for $m \geq (\sqrt{d} + \sqrt{2 
\ln(2k/\delta)})^2$, with 
probability at least $1 - \delta$, 
\begin{equation}
\E_{\bte} [L_o(\bte)] 
- \E_{E_{\hal_*}}[L_o(\hal_*)] 
\leq \Gamma(p,m,k,\delta) \cdot 
D \sigma^2 - d \sigma^2, 
\end{equation}
where 
$\Gamma(p,m,k,\delta) = 
\left(\sqrt{m} + \sqrt{p} + 
\sqrt{2\log(2k/\delta)} 
\right)^2 
\left(\sqrt{m}-\sqrt{d} 
- \sqrt{2 \log(2k/\delta)} \right)^{-2}$.
This proves the remark. 
\end{proof}
\section*{Appendix C: Additional 
Numerical Results}

We report two additional 
observations here. First, a popular way 
to speed up Gaussian sketching is 
to apply faster sketching methods like  
Subsampled Randomized Hadamard Transform
(SRHT). However, SRHT is not always faster, 
for its time complexity is $O(n d \log n)$ 
which is more efficient only when $m \ll n$. 
This is confirmed by the results 
in Figure \ref{fig:time2}, where $n$ 
is reduced by random downsampling. The 
time of SRHT is obtained by using it to 
replace Gaussian when building $\hbe_i$, 
and we see SRHT is only faster when $n$ is small. 
In all cases, $\hte_i$ remains the most 
efficient method as $k$ increases. 
Another observation from the experiment is 
that some $X_i^T X_i$ matrices are not invertible 
(e.g., due to sparsity), 
which is a limitation of the present 
characterization. How to overcome this limitation, 
such as through regularization, is an 
interesting direction for future research. 

\begin{figure}
\centering
\includegraphics[width=.32\linewidth]{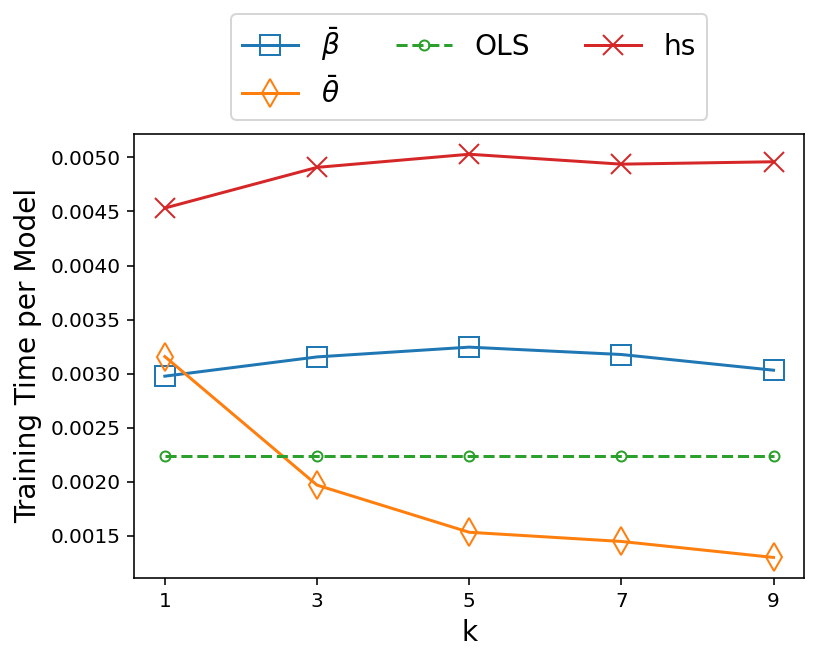}
\includegraphics[width=.32\linewidth]{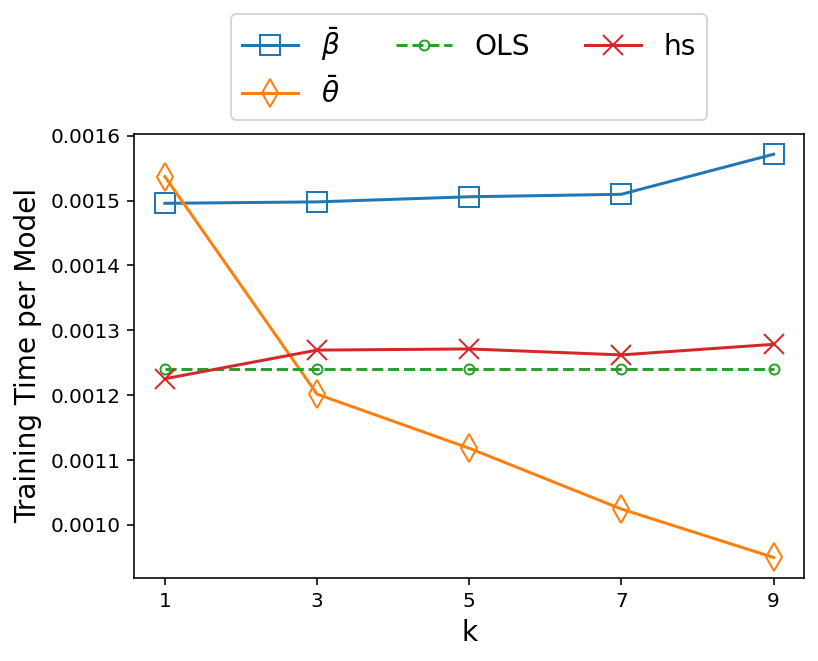}
\includegraphics[width=.32\linewidth]{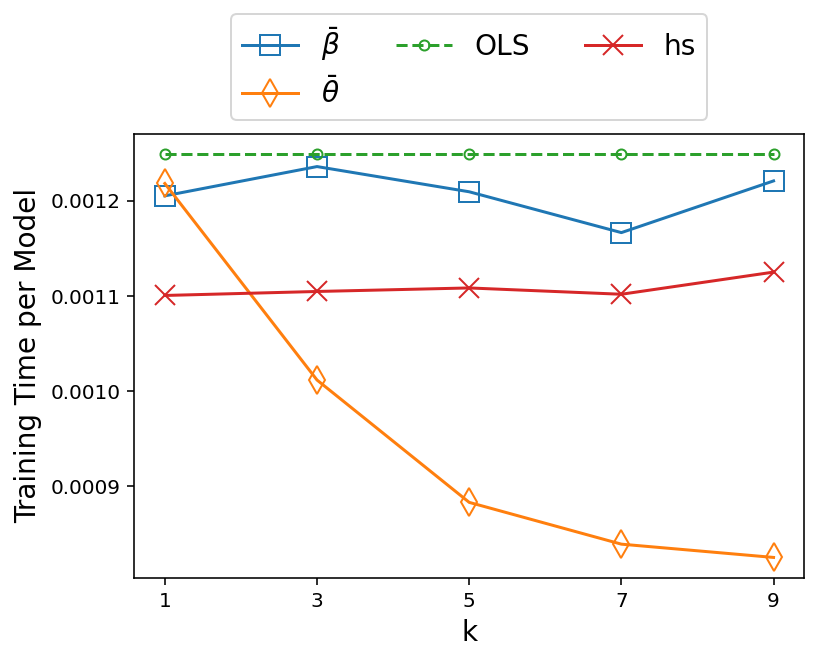}
\vspace{-5pt}
\caption{Training Time per estimator versus 
$k$ on Digit, with $n=1024$ (left), $n=256$ 
(mid) and $n=128$ (right).}
\label{fig:time2}
\end{figure}

\label{LastAppendixPage}


\end{document}